\newtheorem{lemma}{Lemma}
\newtheorem{proof}{Proof}
\title{From Attribution to Action: Jointly ALIGNing Predictions and Explanations}
\author {
    Dongsheng Hong\textsuperscript{\rm 1} \equalcontrib,
    Chao Chen\textsuperscript{\rm 2} \equalcontrib,
    Yanhui Chen\textsuperscript{\rm 1},
    Shanshan Lin\textsuperscript{\rm 1},
    Zhihao Chen\textsuperscript{\rm 1},
    Xiangwen Liao\textsuperscript{\rm 1}\thanks{Corresponding author.}
}
\begin{document}

\maketitle

\begin{abstract}
Explanation-guided learning (EGL) has shown promise in aligning model predictions with interpretable reasoning, particularly in computer vision tasks. 
However, most approaches rely on external annotations or heuristic-based segmentation to supervise model explanations, which can be noisy, imprecise and difficult to scale. 
In this work, we provide both empirical and theoretical evidence that low-quality supervision signals can degrade model performance rather than improve it. 
In response, we propose ALIGN, a novel framework that jointly trains a classifier and a masker in an iterative manner. 
The masker learns to produce soft, task-relevant masks that highlight informative regions, while the classifier is optimized for both prediction accuracy and alignment between its saliency maps and the learned masks.
By leveraging high-quality masks as guidance, ALIGN improves both interpretability and generalizability, showing its superiority across various settings.
Experiments on the two domain generalization benchmarks, VLCS and Terra Incognita, show that ALIGN consistently outperforms six strong baselines in both in-distribution and out-of-distribution settings. 
Besides, ALIGN also yields superior explanation quality concerning sufficiency and comprehensiveness, highlighting its effectiveness in producing accurate and interpretable models.

\end{abstract}


\section{Introduction}

To provide transparent and trustworthy explanations for the decisions made by deep neural networks, 
\textbf{Explanation-Guided Learning} (EGL) \cite{guidedlearning} integrates explanation signals (e.g., human-provided \textit{masks}) into the training process to align model reasoning with interpretable semantics. 
These masks typically highlight regions of interest that correspond to task-relevant, informative components in the input (e.g., objects or salient structures). 
For example, CARE \cite{zhuang2019care}, GRADIA \cite{gao2022gradia}, and MAGI \cite{zhang2023magi} penalize attributions to irrelevant regions based on human-annotated masks, thus encouraging the model to focus on relevant features and make more interpretable decisions. 

\begin{figure}[t]
    \centering
    \includegraphics[width=0.9\linewidth]{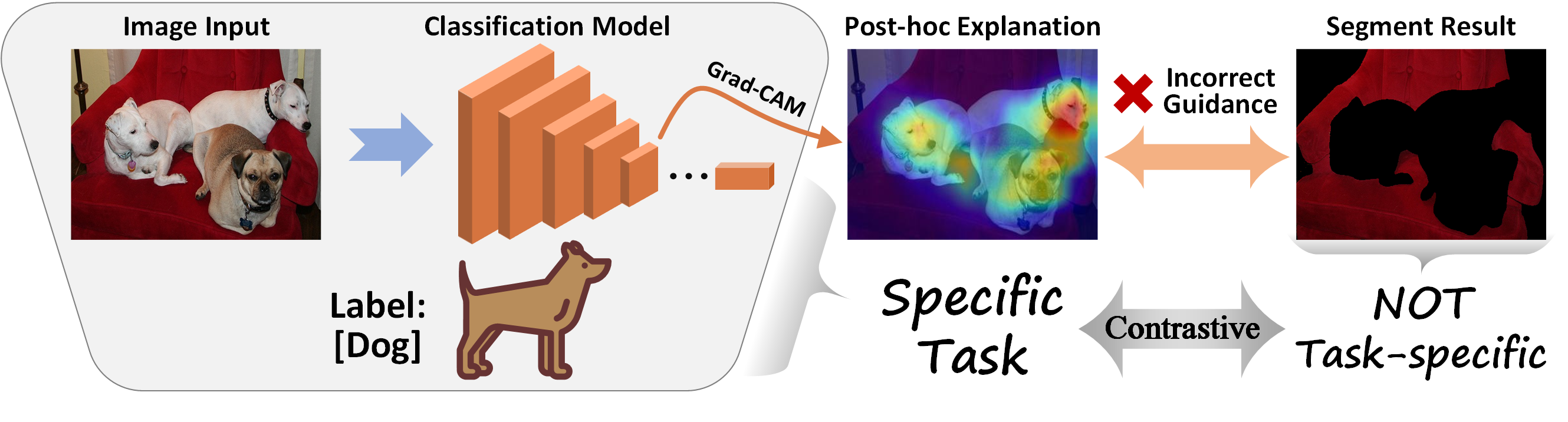}
    \caption{Segmentations as guidance may emphasize background, limiting reliability for task-specific explanations.}
    \label{fig:insight}
\end{figure}

Existing EGL methods rely heavily on \textit{manual annotations}, which are labor-intensive and potentially inaccurate. 
Recent approaches \cite{li2023dre,guesmi2024asgt} attempt to eliminate manual supervision by enforcing explanation consistency during training.
Leveraging segmentation results as guidance can improve explanation quality, but these upstream segmentation models are typically \textbf{not task-specific}, and may highlight irrelevant regions, yielding misleading explanations. 
As illustrated in Fig.~\ref{fig:insight}, although the image contains dogs, the segmentation mainly captures the surrounding environment rather than the target objects. 
This highlights the need for a \textbf{task-aware} masker that produces reliable, semantically aligned guidancen.
Moreover, prior work has primarily emphasized \textit{empirical} performance, with limited \textit{theoretical} insight into their generalization behavior.

In this paper, 
we first revisit the impact of mask quality on EGL by conducting a preliminary experiment, revealing that imprecise or low-quality masks can hinder prediction accuracy.
We further reinforce the need for high-quality, task-relevant masks through a theoretical analysis under the \textit{Probably Approximately Correct} (PAC) learning framework. 
Specifically, we show that better mask quality leads to tighter generalization bounds under domain shifts (e.g., under out-of-distribution settings), as well as lower in-distribution errors. 
Building on these insights, we propose \textbf{Attribution-Learning Iterative Guidance Network} (ALIGN), a novel framework that jointly trains a mask generator (termed \textit{masker}) and a \textit{classifier} in an iterative manner. 
Instead of relying on costly and potentially noisy annotations, ALIGN uses a learnable masker to produce soft masks that highlight semantically relevant regions of the input. Simultaneously, the classifier is optimized not only for predictive accuracy but also to align its own saliency maps with the generated masks. By explicitly guiding the model on which features to attend to, ALIGN enhances both interpretability and generalizability.


Beyond theoretical justification, 
we conduct comprehensive experiments on two standard domain generalization benchmarks, VLCS and Terra Incognita. 
Compared with six state-of-the-art methods, including DRE \cite{li2023dre}, SGDrop \cite{bertoin2024sgdrop}, and SGT \cite{ismail2021sgt}, 
ALIGN achieves superior predictive performance on both in-distribution and out-of-distribution data.
Furthermore, ALIGN produces more meaningful explanations regarding sufficiency and comprehensiveness. 
Qualitative visualizations and extensive ablation studies further demonstrate that ALIGN yields robust, interpretable predictions.

In summary, the contributions of this paper are threefold:
\begin{itemize}
    \item 
    We revisit the role of mask quality in EGL, providing both \textit{empirical} evidence and \textit{theoretical} justification that high-quality masks enhance generalization, while poor masks degrade predictive performance.

    \item 
    We propose \textbf{ALIGN}, a novel annotation-free framework that jointly trains a masker and a classifier, aligning model attributions with learned masks to promote interpretability and generalizability.

    \item 
    Extensive experiments on domain generalization benchmarks demonstrate that ALIGN achieves superior accuracy and explanation quality compared to sota methods, which is supported by visualizations and ablation studies.

\end{itemize}

\section{Related Work}

Depending on the source of supervision used for explanation alignment, Explanation-Guided Learning (EGL) approaches can be broadly categorized into human-annotated and annotation-free methods.

\textit{Human-Annotated Supervision.}
Methods such as CARE \cite{zhuang2019care}, GRADIA \cite{gao2022gradia}, RES \cite{gao2022res}, and MAGI \cite{zhang2023magi} use human-labeled masks or saliency cues to align model attributions with semantically meaningful regions. 
While effective, these methods rely on costly manual annotations, limiting scalability.

\textit{Annotation-Free Training.}
To avoid manual supervision, methods like SGT \cite{ismail2021sgt}, SMOOT \cite{karkehabadi2024smoot}, and DRE \cite{li2023dre} enforce consistency between explanations and predictions using gradient-based masking or stability constraints. However, these methods could rely on in-sentimantical regions and lack theoretical guarantees.

Besides images, EGL has been extended to other modularities. For \textit{graphs}, GazeGNN \cite{wang2024gazegnn}, GNES \cite{gao2021gnes}, and GG-NES \cite{etemadyrad2024ggnes} align attention with node importance to enhance robustness and interpretability. 
For \textit{texts}, some work \cite{li2023symbolic,li2022explanations} leverage LLM-generated rationales to train smaller models, showing promise for scalable, explanation-aware training.
More relevant works are discussed in \ref{appendix:related_works}.

\section{Preliminaries}

\subsection{Explainable machine learning \& Grad-CAM}
Given an input-label pair $(x, y)$, where $x \in \mathbb{R}^d$ and $y\in Y$ denote the input features and the corresponding class label, respectively, 
a classifier $f_\theta: \mathbb{R}^d \rightarrow \mathbb{R}^{|Y|}$ is parameterized by $\theta$. For clarity, we omit $\theta$ and write $f(x)$. 
The model's predicted probability for class $y$ is denoted by $f_y(x)$.


The goal of explainable machine learning (XML) is to compute an importance map $\Phi: \mathbb{R}^d \rightarrow \mathbb{R}^d$ that reflects the relevance of each input dimension (e.g., pixel) to the model's prediction. 
Specifically, $\Phi_y(x)$ estimates the contribution of each element in $x$ to the prediction of class $y$.

In this paper, we adopt Grad-CAM \cite{selvaraju2020gradcam} as the explanation method. 
Let $A^k$ denote the activation map of the $k$-th channel in the selected layer for the input $x$. 
Grad-CAM first computes the importance weight $\alpha^k_y$ for each channel using the gradient of the output score $f_y(x)$ with respect to $A^k$, averaged spatially over all locations $(i,j)$, where $Z$ is the total number of spatial positions:
\begin{equation}
    \alpha^k_y = \frac{1}{Z} \sum_{i,j} \frac{\partial f_y (x)}{\partial A^k_{i,j}}.
\end{equation}

The final explanation map $\Phi_y(x)$ is obtained by a weighted combination of the activation maps followed by a ReLU operation to retain only positive influences:
%
\begin{equation}
\label{eq:grad_cam}
\Phi_y(x) = \text{ReLU}\left( \sum_k \alpha^k_y A^k \right).    
\end{equation}

\subsection{Explanation-guided learning}
EGL seeks to enhance the model's interpretability by integrating attribution-based signals during training
\cite{ross2017right}. 
%
More specifically,  
EGL encourages the alignment of the explanation $\Phi_y(x)$ with a supervision signal, represented by a mask $M(x)$:
\begin{equation}
\label{eq:preliminary_egl}
    \mathcal{L}_{egl} = d\big(\Phi_y(x), M(x)\big),    
\end{equation}
where $d(\cdot,\cdot)$ denotes a divergence measure, such as $L_1$ norm or Binary Cross-entropy (BCE) \cite{ruby2020binary}, 
and the mask $M(x)$ is typically defined by fixed annotations \cite{ross2017right,rieger2020cdep} or predefined segmented results. 
By aligning the explanation with these signals, EGL ensures that the model focuses on relevant regions, improving interpretability.

\section{Methodology}
In this section,
we begin with a \textit{preliminary experiment} (Sec.~\ref{sec:preliminary}) showing that annotation-free masks generated by off-the-shelf segmentation models can be imprecise and may \textit{hinder} model performance.
We then provide \textit{theoretical analyses} (Sec.~\ref{sec:theoretical}) to underscore the importance of learning high-quality, task-relevant masks to enhance both predictive accuracy and explanation reliability. 
Finally, we introduce \textbf{ALIGN} (Attribution-Learning Iterative Guidance Network) in Sec.~\ref{sec:ALIGN}, a framework that iteratively trains a classifier $f$ and a masker $M$ to focus on semantically meaningful regions of the input.

\subsection{Precise segmentation could improve prediction performance}
\label{sec:preliminary}
As previously discussed, obtaining high-quality human-annotated masks for training is both labor-intensive and costly. 
A common workaround is to leverage large pre-trained segmentation models such as the Segment Anything Model (SAM) \cite{kirillov2023segany} to generate pseudo-masks in a zero-shot manner. 
While these models are capable of producing dense segmentations across diverse inputs, their outputs are not specifically optimized for the downstream prediction task and may include irrelevant or noisy regions. 
Such misaligned or imprecise masks can misguide the learning process, introducing spurious correlations and ultimately degrading predictive performance.

To empirically assess the impact of mask quality, we compare segmentation signals derived from SAM with those generated by our proposed task-driven masker. 
For both types of masks, we apply a controlled background perturbation: the background, those not highlighted by the mask, are blurred using Gaussian noise, while the foreground remains unaltered. 
In this way, the saliency signal identified by the mask is preserved, while background distractions are suppressed. 
Then, well trained classifiers are evaluated on these modified inputs\footnote{
See Sec.~\ref{sec:experiments} for implementation details and Sec.~\ref{sec:ablation_study} for ablation results when \textit{training} with different masks.
}. 
In principle, a high-quality mask should preserve task-relevant features and thus lead to improved performance when the background is suppressed. 
Conversely, poor masks may obscure essential features or retain irrelevant ones, resulting in degraded performance.

\begin{figure}[ht]
  \centering
  \begin{subfigure}{0.495\linewidth}
    \centering
    \includegraphics[width=\linewidth]{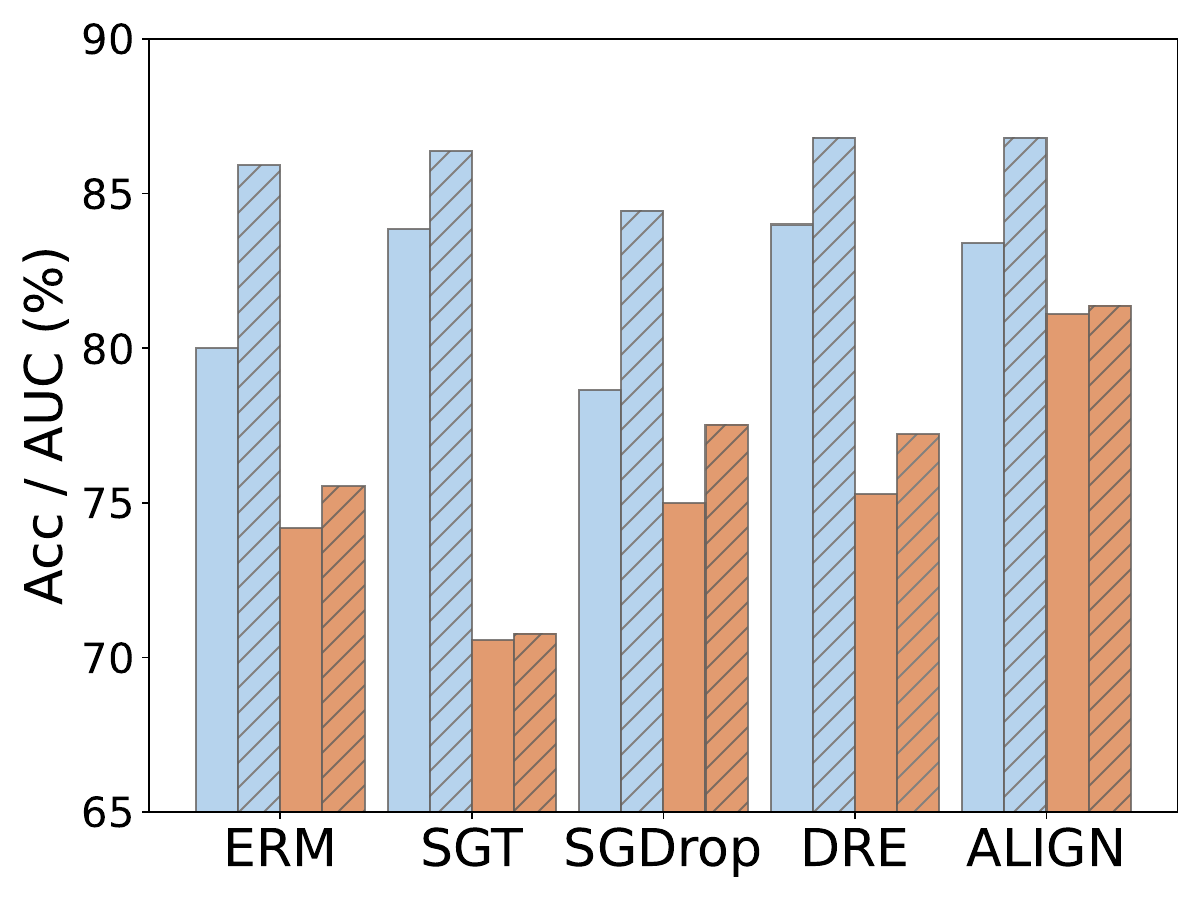}
    \caption{Result on VOC2007}
    \label{fig:bg_VOC2007}
  \end{subfigure}
  \centering
  \begin{subfigure}{0.495\linewidth}
    \centering
    \includegraphics[width=\linewidth]{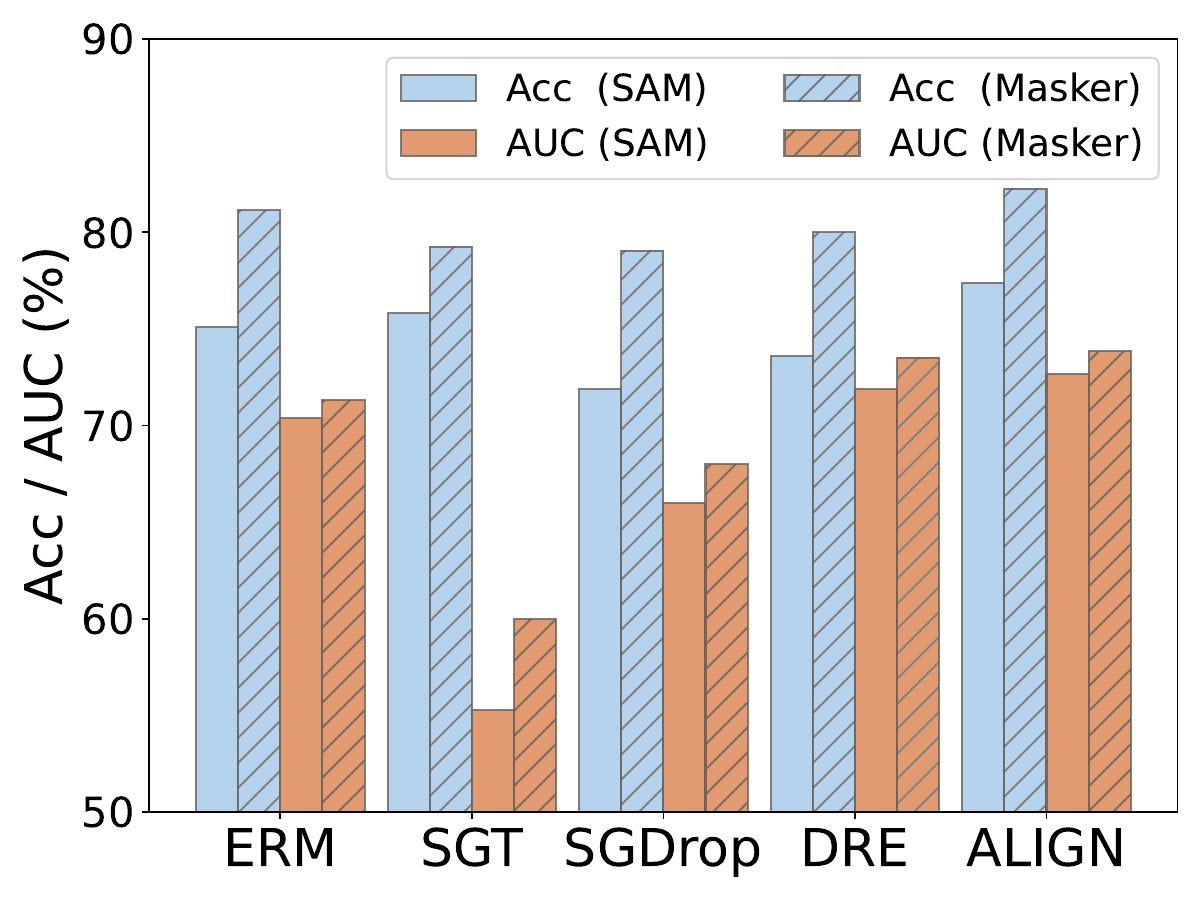}
    \caption{Result on LabelMe}
    \label{fig:bg_LabelMe}
  \end{subfigure}
  \caption{Impact of mask quality for predictions.}
  \label{fig:preliminary}
\end{figure}

Experiments are conducted on VOC2007 and LabelMe, two subsets of the VLCS benchmark \cite{VLCS}. 
We evaluate five models: the standard backbone model (ERM), three state-of-the-art explanation-guided learning methods (SGT, SGDrop, and DRE), and our proposed ALIGN framework. 
As shown in Fig.~\ref{fig:preliminary}, 
all models exhibit noticeable improvements in classification accuracy when using masks produced by our task-driven masker, compared to those generated by SAM\footnote{
The corresponding numerical results can be found in \ref{appendix:mask}.
}.

The results suggest that segmentation quality plays a critical role in guiding model reasoning and prediction. 
The consistent gap between two cases indicates that task-driven masker is more effective than static, pre-trained segmentation outputs. 
It supports our hypothesis: \textit{segmentation signals not optimized for the prediction task can hinder performance}, and precise and task-relevant masks are essential for effective learning. 
Next, we discuss the necessity of high-quality masks formally.

\subsection{Theoretical analysis}
\label{sec:theoretical}

\subsubsection{Basic notations.}
We consider the domain shift scenario, where the input distribution could change between source domain $\mathcal{D}_S$ (training) and a target domain $\mathcal{D}_T$ (testing).
Assume that each input $x$ can be decomposed into objective $x^{(obj)}$ and background $x^{(bg)}$ according to segmentation $M$: 
\begin{equation}
    x^{(obj)} = M \odot x, \quad 
    x^{(bg)} = (1-M) \odot x.
\end{equation}

We compare three hypotheses (models) correspond to different assumptions of mask $M$.
\begin{itemize}
    \item $f_1$: Vanilla models can potentially use \textit{all} features in the image, including spurious features. 
    \item $f_2$: Perfect guided models use \textit{all relevant} regions to the prediction, 
    e.g., $\frac{d f_2(x^{(bg)})}{dx}\approx \mathbf{0}$.
    \item $f_3$: Strict guided models utilize a \textit{strict subset} of $x^{(obj)}$, namely $x^{(sub)}$.
    SAM could lead to a mix of $f_1$ and $f_3$. 
\end{itemize}

\subsubsection{Bounding generalization error under domain shift.}

To analyze how model predictions vary across domains, 
Lemma \ref{lemma:lipschitz_reduction} shows that less invariant features help reduce sensitivity.
In other words, $f_2$ and $f_3$ are expected to be less sensitive than $f_1$ when background changes.

\begin{lemma}
\label{lemma:lipschitz_reduction}
Let $x_S, x_T \in \mathbb{R}^d$ be two inputs with identical object features and differing background: $x_S^{(obj)} = x_T^{(obj)}$, $x_S^{(bg)} \neq x_T^{(bg)}$. Define the local Lipschitz constant between $x_S$ and $x_T$ for model $f$ as:
\begin{equation}
    \kappa_f(x_S, x_T) := \frac{|f(x_T) - f(x_S)|}{\|x_T - x_S\|_2}.
\end{equation}
If $f_1$ is sensitive to all features and $f_2$ satisfies $d h_2(x^{(bg)}) / d x \approx \mathbf{0}$, then:
\begin{equation}
\begin{aligned}
    |f_2(x_T) - f_2(x_S)| &< |f_1(x_T) - f_1(x_S)|, \\
    \kappa_{f_2}(x_S, x_T) &< \kappa_{f_1}(x_S, x_T).
\end{aligned}
\end{equation}
\end{lemma}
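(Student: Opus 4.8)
The plan is to exploit the structural fact that $x_S$ and $x_T$ differ only in their background components, so the perturbation vector $\delta := x_T - x_S$ is confined to the background region. Indeed, since $x_S^{(obj)} = x_T^{(obj)}$ we have $\delta = x_T - x_S = (1-M)\odot(x_T - x_S)$, meaning $\delta$ has zero support on the object pixels. This reduces the problem to comparing how the two models respond to a perturbation that lives entirely in the background subspace, which is precisely where $f_2$ is (by assumption) insensitive.

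First I would write the output change of each model as a path integral of its gradient. Along the straight segment $\gamma(t) = x_S + t\delta$ with $t \in [0,1]$, the fundamental theorem of calculus gives $f(x_T) - f(x_S) = \int_0^1 \langle \nabla f(\gamma(t)), \delta \rangle\, dt$. Because $\delta$ is supported only on the background, the object components of $\nabla f$ drop out of the inner product, leaving $f(x_T) - f(x_S) = \int_0^1 \langle \nabla_{bg} f(\gamma(t)), \delta^{(bg)} \rangle\, dt$. This single identity will be applied to both $f_1$ and $f_2$.

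Next I would invoke the two hypotheses. For $f_2$, the assumption $d f_2(x^{(bg)})/dx \approx \mathbf{0}$ forces the background gradient to (approximately) vanish along the entire path, so the integrand is negligible and $|f_2(x_T) - f_2(x_S)| \approx 0$. For $f_1$, sensitivity to all features means its background gradient is not identically zero along $\gamma$, so its path integral is a genuinely nonzero quantity, $|f_1(x_T) - f_1(x_S)| > 0$. Together these give the numerator inequality $|f_2(x_T) - f_2(x_S)| < |f_1(x_T) - f_1(x_S)|$. The Lipschitz claim then follows at once: both constants share the common positive denominator $\|x_T - x_S\|_2 = \|\delta\|_2 > 0$ (nonzero because the backgrounds differ), and dividing the numerator inequality by it preserves the strict ordering, yielding $\kappa_{f_2}(x_S, x_T) < \kappa_{f_1}(x_S, x_T)$.

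I expect the main obstacle to be making the two qualitative assumptions precise enough to support a strict inequality rather than a heuristic one. The cleanest route is to promote $\nabla_{bg} f_2 \equiv \mathbf{0}$ to an exact condition, which makes $f_2(x_T) = f_2(x_S)$ exactly and renders the first inequality trivial; otherwise I would carry an explicit residual bound $\sup_{t}\|\nabla_{bg} f_2(\gamma(t))\|_2 \le \varepsilon$ and a matching lower bound $c$ on the background sensitivity of $f_1$, so that the strict inequality holds precisely when $c > \varepsilon \|\delta\|_2$. Formalizing ``sensitive to all features'' as such a quantitative lower bound, and justifying that it dominates $f_2$'s residual, is the delicate part of turning this sketch into a rigorous argument.
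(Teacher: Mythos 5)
You cannot be checked against the paper's own proof here, because the paper does not contain one: although Section~4.2 asserts that ``All the proofs can be found in the appendix,'' the appendix proves only Lemmas~2--4 (the MSE discrepancy bound, the cross-entropy stability bound, and the in-domain feature-inclusion result). Lemma~1 is supported only by a citation to empirical findings that robust models exhibit small input-gradients on irrelevant features, so your path-integral argument is a genuine addition rather than a rediscovery of the paper's route. It is also the natural argument: since $x_S^{(obj)}=x_T^{(obj)}$, the displacement $\delta=x_T-x_S$ is supported on the background; writing $\gamma(t)=x_S+t\delta$, the fundamental theorem of calculus gives $f(x_T)-f(x_S)=\int_0^1\langle\nabla_{bg}f(\gamma(t)),\delta^{(bg)}\rangle\,dt$; the hypothesis on $f_2$ (you correctly read the statement's $h_2$ as $f_2$) kills this integral; and the shared positive denominator $\|\delta\|_2>0$ transfers the numerator inequality to the Lipschitz constants. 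You also rightly observe that the lemma as stated cannot yield a \emph{strict} inequality from ``$\approx\mathbf{0}$'' and ``sensitive to all features'' without quantitative hypotheses --- a weakness of the lemma itself, not of your proof.

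Two points in your quantitative patch deserve care. First, a pointwise lower bound $c$ on $\|\nabla_{bg}f_1(\gamma(t))\|_2$ does not by itself lower-bound $|f_1(x_T)-f_1(x_S)|$: the integrand $\langle\nabla_{bg}f_1(\gamma(t)),\delta\rangle$ can change sign along the segment and cancel, so ``background sensitivity of $f_1$'' must be formalized as a lower bound on the integral itself --- for instance, by assuming the directional derivative along $\delta$ keeps a fixed sign on $[0,1]$, or by directly postulating $|f_1(x_T)-f_1(x_S)|\ge c\,\|\delta\|_2$, which makes the comparison with $f_2$'s residual bound $|f_2(x_T)-f_2(x_S)|\le\varepsilon\|\delta\|_2$ immediate whenever $c>\varepsilon$. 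Second, the relevant models (ResNet classifiers with ReLU activations) are only piecewise differentiable, so the fundamental-theorem step should be stated for locally Lipschitz $f$ with an almost-everywhere gradient along the segment; this is standard but should be said. With those two repairs, your sketch is a complete proof of a properly quantified version of the lemma, and strictly more than the paper itself provides.
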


In practice, adversarially robust models indeed exhibit smaller input-gradients on perceptually irrelevant features \cite{srinivas2023models,tsipras2018robustness}. 
In other words, explicitly encouraging models to rely on fewer irrelevant features results in more robust behavior. 

Based on the above sensitivity results, 
we further derive discrepancy bounds on the change in mean squared error (MSE) $\Delta_{MSE}$ (Lemma \ref{lemma:mse_discrepancy}) and cross entropy loss $\Delta_{CE}$ (Lemma \ref{lemma:ce_discrepancy}) across domains. 
Both Lemmas imply that $f_2$, which does not rely on spurious features, shows a lower discrepancy bound, and thus better generalizability, than those use all features ($f_1$). 
All the proofs can be found in \ref{appendix:proofs}.


\begin{lemma}[MSE Discrepancy Bound under Domain Shift]
\label{lemma:mse_discrepancy}
If the conditional distribution $P(y \mid x^{(obj)})$ is the same across domains $\mathcal{D}_S$ and $\mathcal{D}_T$, 
and both the label values $y$ and model prediction $f(x)$ are bounded: $|y| \leq 1$ and $|f(x)|\leq 1$. Then, the MSE discrepancy under domain shifts:
\begin{equation}
\begin{aligned}
\label{eq:mse_discrepancy_bound}
\Delta_{MSE} & (f; \mathcal{D}_S,\mathcal{D}_T) 
:= \left| \mathbb{E}_{\mathcal{D}_T}[(f(x) - y)^2] - \mathbb{E}_{\mathcal{D}_S}[(f(x) - y)^2] \right| \\
& \leq 4 \left| \mathbb{E}_{\mathcal{D}_T}[f(x)] - \mathbb{E}_{\mathcal{D}_S}[f(x)] \right| 
+ \left| \mathbb{E}_{\mathcal{D}_T}[y^2] - \mathbb{E}_{\mathcal{D}_S}[y^2] \right|.
\end{aligned}
\end{equation}
When 
$\mathbb{E}_{\mathcal{D}_T}[y^2] = \mathbb{E}_{\mathcal{D}_S}[y^2]$ (i.e., no label distribution shift):
\begin{equation}
\Delta_{MSE}
\leq 4 \left| \mathbb{E}_{\mathcal{D}_T}[f(x)] - \mathbb{E}_{\mathcal{D}_S}[f(x)] \right|.
\end{equation}
\end{lemma}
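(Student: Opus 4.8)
The plan is to reduce the squared-error discrepancy to the first-moment discrepancy $|\mathbb{E}_{\mathcal{D}_T}[f(x)] - \mathbb{E}_{\mathcal{D}_S}[f(x)]|$ by expanding the quadratic and then controlling the resulting cross and quadratic terms through the boundedness hypotheses. First I would expand $(f(x)-y)^2 = f(x)^2 - 2 f(x) y + y^2$ and use linearity of expectation to rewrite
\[
\mathbb{E}_{\mathcal{D}_T}[(f(x)-y)^2] - \mathbb{E}_{\mathcal{D}_S}[(f(x)-y)^2]
\]
as a sum of three moment differences: a quadratic term in $f^2$, a cross term in $fy$, and a label term in $y^2$. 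Taking absolute values and applying the triangle inequality isolates the three contributions. The label term is already exactly the second summand on the right-hand side, so the remaining work is to bound the quadratic and cross discrepancies by $2$ and $1$ times $|\mathbb{E}_{\mathcal{D}_T}[f] - \mathbb{E}_{\mathcal{D}_S}[f]|$ respectively, which combine to the factor $4 = 2 + 2\cdot 1$.

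For the quadratic term I would exploit the identity $a^2 - b^2 = (a-b)(a+b)$ together with $|f(x)| \le 1$: introducing a coupling $(x_S, x_T)$ of $\mathcal{D}_S$ and $\mathcal{D}_T$ with matched object features (as in Lemma~\ref{lemma:lipschitz_reduction}), the factor $|f(x_T)+f(x_S)|$ is bounded by $2$, so the quadratic discrepancy is controlled by twice the first-moment discrepancy of $f$. For the cross term I would invoke the invariance of $P(y \mid x^{(obj)})$: conditioning on $x^{(obj)}$ lets me replace $y$ by its domain-invariant conditional mean $\mathbb{E}[y \mid x^{(obj)}]$, and since $|y| \le 1$ this mean is bounded by $1$, so the cross-moment discrepancy is likewise dominated by $|\mathbb{E}_{\mathcal{D}_T}[f] - \mathbb{E}_{\mathcal{D}_S}[f]|$. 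Summing the three bounds yields the claimed inequality, and the special case follows immediately upon setting $\mathbb{E}_{\mathcal{D}_T}[y^2] = \mathbb{E}_{\mathcal{D}_S}[y^2]$.

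The hard part will be making the reduction of the quadratic and cross moments to the first moment of $f$ genuinely rigorous rather than merely algebraic. The factorization $a^2-b^2=(a-b)(a+b)$ controls a \emph{pointwise} difference under a coupling, whereas the target is the difference of \emph{expectations} $|\mathbb{E}_{\mathcal{D}_T}[f] - \mathbb{E}_{\mathcal{D}_S}[f]|$, which can be strictly smaller than the coupled expectation of $|f(x_T)-f(x_S)|$ when the two domains induce cancellation in the first moment. I therefore expect that the invariance of $P(y \mid x^{(obj)})$, and implicitly the stability of the object features across domains built into the object/background decomposition, must be used precisely to rule out such cancellation and to tie all three moment differences to the single scalar $|\mathbb{E}_{\mathcal{D}_T}[f] - \mathbb{E}_{\mathcal{D}_S}[f]|$. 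Landing the constants exactly at $2$ and $1$, and hence at $4$, will hinge on applying $|f|\le 1$ and $|y|\le 1$ at the right points in this reduction.
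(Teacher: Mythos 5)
You have reproduced the paper's proof skeleton exactly --- expand $(f(x)-y)^2$, apply the triangle inequality, and then try to dominate the $f^2$-discrepancy and the cross-term discrepancy by $2$ and $1$ times the first-moment gap --- and the difficulty you flag in your final paragraph is a genuine, fatal gap, not a technicality you could have smoothed over. The two reductions your plan needs,
\begin{align*}
\left|\mathbb{E}_{\mathcal{D}_T}[f(x)^2]-\mathbb{E}_{\mathcal{D}_S}[f(x)^2]\right| &\le 2\left|\mathbb{E}_{\mathcal{D}_T}[f(x)]-\mathbb{E}_{\mathcal{D}_S}[f(x)]\right|,\\
\left|\mathbb{E}_{\mathcal{D}_T}[y\,f(x)]-\mathbb{E}_{\mathcal{D}_S}[y\,f(x)]\right| &\le \left|\mathbb{E}_{\mathcal{D}_T}[f(x)]-\mathbb{E}_{\mathcal{D}_S}[f(x)]\right|,
\end{align*}
are false under the stated hypotheses: a difference of second moments is not controlled by a difference of first moments. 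Concretely, let $y\equiv 0$ in both domains (so $P(y\mid x^{(obj)})$ is trivially invariant and the $y^2$-terms cancel), let $\mathcal{D}_S$ put mass $1/2$ on inputs where $f=1$ and mass $1/2$ where $f=0$, and let $\mathcal{D}_T$ be a point mass on an input where $f=1/2$, with all variation confined to background features. Then $\mathbb{E}_{\mathcal{D}_T}[f]=\mathbb{E}_{\mathcal{D}_S}[f]=1/2$, yet $\Delta_{MSE}=\left|1/4-1/2\right|=1/4>0$, while the claimed bound evaluates to $0$. So no argument closes your gap: the lemma as stated fails, and your hoped-for rescue --- that invariance of $P(y\mid x^{(obj)})$ rules out the cancellation --- cannot work, because that assumption constrains only the label mechanism given object features and says nothing about the marginal law of $f(x)$, which for an $f_1$-type model depends on background features that shift freely between domains.

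Your coupling instinct is sound but proves a different statement. Under a coupling $(x_S,x_T)$, the factorization $a^2-b^2=(a-b)(a+b)$ with $|f|\le 1$ gives $\left|\mathbb{E}\left[f(x_T)^2-f(x_S)^2\right]\right|\le 2\,\mathbb{E}\left|f(x_T)-f(x_S)\right|$, and similarly for the cross term; this yields a correct bound whose right-hand side is the coupled $L_1$ discrepancy $\mathbb{E}\left|f(x_T)-f(x_S)\right|$ (in the spirit of the pointwise $\epsilon$ assumed in Lemma~\ref{lemma:ce_discrepancy}), which strictly dominates $\left|\mathbb{E}_{\mathcal{D}_T}[f]-\mathbb{E}_{\mathcal{D}_S}[f]\right|$ and is the most the hypotheses support. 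For what it is worth, the paper's own proof asserts exactly your two missing inequalities, attributing them to the Cauchy--Schwarz inequality, so it fails at precisely the step you declined to wave through; the counterexample above applies to it verbatim. In short: your approach matches the paper's, your self-diagnosis of the crux is accurate, but the proposal does not (and, as stated, cannot) establish the lemma --- the honest fix is to restate the bound with the coupled discrepancy on the right-hand side, or to add an assumption forcing the distribution of $f(x)$ to shift only through its mean.
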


\begin{lemma}[Cross-Entropy Stability to Small Shifts]
\label{lemma:ce_discrepancy}
Suppose for every input $x$ and its label $y$, the difference in predicted probabilities between the two domains is bounded: 
$\big| f(x_T)-f(x_S) \big| \leq \epsilon$,
then the absolute difference in cross-entropy risk is bounded as 
\begin{equation}
    \Delta_{CE}:=|CE_{\mathcal{D}_T}(f) - CE_{\mathcal{D}_S}(f)| \leq C \cdot \epsilon,
\end{equation}
for some constant $C$ that depends only on the range of $f(x)$, e.g., $C = \frac{1}{\min_i {f(x)_i}}$. 
\end{lemma}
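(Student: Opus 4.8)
The plan is to reduce the difference of population cross-entropy risks to a \emph{pointwise} comparison of $-\log f_y$ evaluated at paired inputs, and then to exploit the fact that $-\log$ is Lipschitz on any interval bounded away from $0$. Concretely, I would first invoke the decomposition from the domain-shift setup: because $x_S$ and $x_T$ share the same object features $x^{(obj)}$ and hence the same label $y$, there is a natural label-preserving coupling between $\mathcal{D}_S$ and $\mathcal{D}_T$ that maps each source sample $(x_S, y)$ to its target counterpart $(x_T, y)$. Writing both risks as expectations over this coupling, the difference becomes $CE_{\mathcal{D}_T}(f) - CE_{\mathcal{D}_S}(f) = \mathbb{E}\big[\log f_y(x_S) - \log f_y(x_T)\big]$, so that $\Delta_{CE}$ is controlled by $\mathbb{E}\big[\,|\log f_y(x_S) - \log f_y(x_T)|\,\big]$ via Jensen's inequality and the triangle inequality.

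The second step is to linearize the logarithm. By the mean value theorem, $|\log a - \log b| = |a - b|/\xi$ for some $\xi$ lying between $a = f_y(x_S)$ and $b = f_y(x_T)$. Since every predicted probability satisfies $f_y(x) \geq \min_i f(x)_i > 0$, the intermediate point obeys $\xi \geq \min_i f(x)_i$, giving the slope bound $1/\xi \leq C$ with $C = 1/\min_i f(x)_i$. Combining this with the hypothesis $|f(x_T) - f(x_S)| \leq \epsilon$ applied to the true-class coordinate yields the pointwise estimate $|\log f_y(x_S) - \log f_y(x_T)| \leq C\epsilon$. Taking expectation over the coupling and using that $C\epsilon$ is a deterministic constant then delivers $\Delta_{CE} \leq C\epsilon$, as claimed.

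The main obstacle I anticipate is not the calculus but the two supporting regularity conditions that make the chain of inequalities legitimate. First, the coupling argument requires that each source input be matched to a target input with identical object content and label; this is exactly the $x_S^{(obj)} = x_T^{(obj)}$ assumption underlying Lemma~\ref{lemma:lipschitz_reduction}, and I would state it explicitly so that the expectation difference is well defined. Second, the Lipschitz slope $1/\xi$ blows up as the predicted probability approaches $0$, so the bound is only meaningful when $\min_i f(x)_i$ is bounded away from zero uniformly in $x$; I would therefore record this positivity (equivalently, a confidence-clipping or softmax-temperature assumption) as the precise meaning of the claim that $C$ ``depends only on the range of $f$.'' A minor point to handle carefully is that the hypothesis bounds $|f(x_T) - f(x_S)|$, and one must ensure this transfers to the single coordinate $f_y$ indexed by the true label, which holds immediately if the bound is read coordinate-wise (or in the $\ell_\infty$ norm).
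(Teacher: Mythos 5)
Your proposal is correct and follows essentially the same route as the paper's own proof: both reduce $\Delta_{CE}$ to a pointwise bound on $|\log f_y(x_T) - \log f_y(x_S)|$ via the Lipschitz continuity of $-\log$ on an interval bounded away from zero (you via the mean value theorem, the paper via the bounded derivative), and then take expectation over the paired inputs. Your version is in fact somewhat more careful than the paper's sketch, since you make explicit the label-preserving coupling, the uniform positivity of $\min_i f(x)_i$, and the coordinate-wise reading of the hypothesis --- all of which the paper leaves implicit.
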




\begin{figure*}
    \centering
    \includegraphics[width=0.97\linewidth]{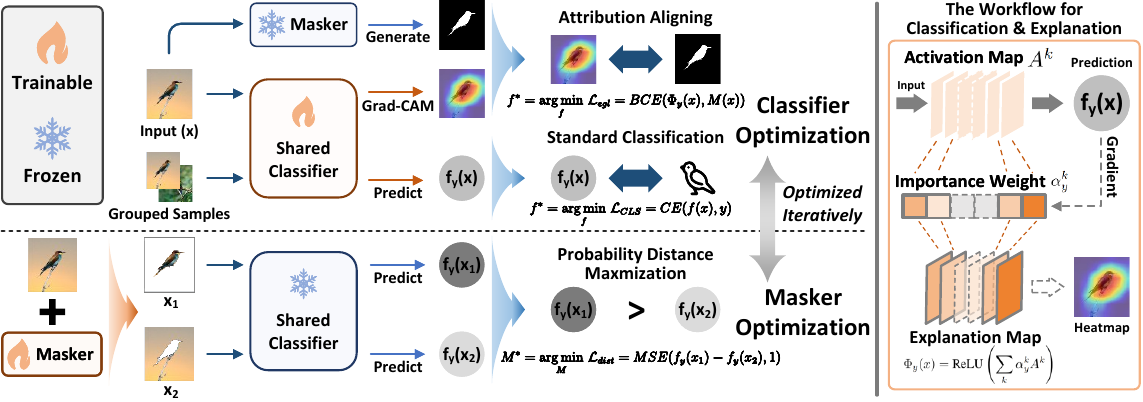}
    \caption{Overview of the proposed ALIGN framework.}
    \label{fig:framework}
\end{figure*}

Note that $f_3$ outperforms $f_1$ by strictly restricting its input to a subset of the salient features $x^{(obj)}$, as supported by the preceding lemmas. However, as established in Lemma \ref{lemma:in_domain_error_feature_inclusion}, $f_2$ shows superiority over $f_3$ concerning in-domain performance, achieving both lower mean squared error and cross-entropy loss by leveraging the complete set of relevant features.
These analyses motivate the design of \textit{a learnable masker to guide model training} toward more effective feature selection. 

\begin{lemma}[In-domain errors concerning Feature Inclusion]
\label{lemma:in_domain_error_feature_inclusion}
Let $f_2^*(x^{{(obj)}}) := \mathbb{E}[y \mid x^{(obj)}]$ and $f_3^*(x^{(sub)}) := \mathbb{E}[y \mid x^{(sub)}]$ denote the Bayes optimal predictors using feature $x^{(obj)}$ and $(x^{(sub)})$, respectively. 
Then, the associated Bayes risks under squared loss satisfy:
\begin{equation}
\mathbb{E}_{\mathcal{D}_S}\left[(y - f_2^*(x^{(obj)}))^2\right] \leq \mathbb{E}_{\mathcal{D}_S}\left[(y - f_3^*(x^{(sub)}))^2\right],
\end{equation}
and the following cross-entropy error inequality holds:
\begin{equation}
    \mathbb{E}_{\mathcal{D}_S}[-\log f_2(x)_y] \leq \mathbb{E}_{\mathcal{D}_S}[-\log f_3(x)_y].
\end{equation}
Both of which with strict inequality if there exists a feature $x_j \in x^{(obj)} \setminus x^{(sub)}$ such that $x_j \not\!\perp\!\!\!\perp y \mid x^{(sub)}$.
\end{lemma}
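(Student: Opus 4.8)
The plan is to recognize both inequalities as instances of the elementary principle that conditioning on a richer feature set can only reduce the Bayes risk, and then to quantify the resulting gap so that the strictness claim follows from the stated dependence hypothesis. The key structural fact is that $x^{(sub)}$ is a strict sub-collection of the coordinates of $x^{(obj)}$, so $\sigma(x^{(sub)}) \subseteq \sigma(x^{(obj)})$; equivalently, any function of $x^{(sub)}$ is also a function of $x^{(obj)}$. Throughout I treat $f_2^*, f_3^*$ as the $L^2$-optimal (conditional-expectation) predictors for the squared-loss claim, and $f_2(x)_y = P(y \mid x^{(obj)})$, $f_3(x)_y = P(y \mid x^{(sub)})$ as the Bayes-optimal posterior predictors for the cross-entropy claim, since these minimize the respective population risks.

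For the squared-loss inequality, I would first note that $\mathbb{E}_{\mathcal{D}_S}[y \mid x^{(sub)}] = \mathbb{E}_{\mathcal{D}_S}[\,\mathbb{E}_{\mathcal{D}_S}[y \mid x^{(obj)}] \mid x^{(sub)}]$ by the tower property, so $f_3^*(x^{(sub)})$ is the $L^2$ projection of $f_2^*(x^{(obj)})$ onto the coarser $\sigma$-algebra. The residual $f_2^*(x^{(obj)}) - f_3^*(x^{(sub)})$ is therefore orthogonal to every $\sigma(x^{(sub)})$-measurable function, which makes the cross term vanish in the Pythagorean decomposition (writing $f_2^*, f_3^*$ for the predictors with their arguments suppressed):
\begin{equation}
\mathbb{E}_{\mathcal{D}_S}[(y - f_3^*)^2] = \mathbb{E}_{\mathcal{D}_S}[(y - f_2^*)^2] + \mathbb{E}_{\mathcal{D}_S}[(f_2^* - f_3^*)^2].
\end{equation}
Since the last term is nonnegative, the claimed inequality follows, with equality exactly when $f_2^* = f_3^*$ almost surely.

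For the cross-entropy inequality, I would use the fact that the minimum of $\mathbb{E}[-\log g(x)_y]$ over predictors $g$ measurable with respect to a given feature set is attained at the true posterior and equals the conditional entropy $H(y \mid \cdot)$. Thus the gap is a conditional mutual information,
\begin{equation}
\mathbb{E}_{\mathcal{D}_S}[-\log f_3(x)_y] - \mathbb{E}_{\mathcal{D}_S}[-\log f_2(x)_y] = H(y \mid x^{(sub)}) - H(y \mid x^{(obj)}) = I\big(y;\, x^{(obj)} \setminus x^{(sub)} \mid x^{(sub)}\big),
\end{equation}
which is nonnegative by the Gibbs/Jensen inequality (equivalently, it is an averaged KL divergence). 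This yields the second inequality.

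The main obstacle is the strictness claim, and in particular keeping the two cases honest. For cross-entropy, strictness is immediate: the hypothesis that some $x_j \in x^{(obj)} \setminus x^{(sub)}$ satisfies $x_j \not\perp y \mid x^{(sub)}$ means $P(y \mid x^{(obj)}) \neq P(y \mid x^{(sub)})$ on a set of positive measure, so the conditional mutual information is strictly positive. For squared loss, however, the residual gap $\mathbb{E}[(f_2^* - f_3^*)^2]$ depends only on the conditional \emph{mean}, so I must argue that the conditional dependence actually shifts $\mathbb{E}[y \mid x^{(obj)}]$ away from $\mathbb{E}[y \mid x^{(sub)}]$ on a positive-measure set, i.e. that the dependence is not purely higher-order (for instance, in variance alone). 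I would therefore read $x_j \not\perp y \mid x^{(sub)}$ as mean-relevance, which is the intended meaning here; under that reading the extra term is strictly positive and both inequalities become strict. I expect this distinction between first-moment and full-distribution dependence to be the one genuinely delicate point, the remaining steps being the standard projection identity and the nonnegativity of conditional mutual information.
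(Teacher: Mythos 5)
Your proposal is correct and, at its core, follows the same route as the paper's proof. For the squared-loss inequality the paper applies the law of total variance, $\mathrm{Var}(y \mid x^{(sub)}) = \mathbb{E}[\mathrm{Var}(y \mid x^{(obj)}) \mid x^{(sub)}] + \mathrm{Var}(\mathbb{E}[y \mid x^{(obj)}] \mid x^{(sub)})$, takes expectations, and identifies the Bayes risk with the expected conditional variance; your $L^2$ projection/Pythagorean argument is the same fact in different clothing, since your gap term $\mathbb{E}[(f_2^* - f_3^*)^2]$ is exactly the expectation of the paper's second variance term. For cross-entropy, both you and the paper reduce the gap to $H(y \mid x^{(sub)}) - H(y \mid x^{(obj)})$ and invoke its nonnegativity, which the paper attributes to the data-processing inequality and you recognize as the conditional mutual information $I(y;\, x^{(obj)} \setminus x^{(sub)} \mid x^{(sub)}) \geq 0$ --- identical content.

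Where you genuinely add value is the strictness discussion, and your caution there is warranted: it exposes a point the paper glosses over. The paper's proof correctly shows that equality in the MSE case holds iff $\mathbb{E}[y \mid x^{(obj)}] = \mathbb{E}[y \mid x^{(sub)}]$ almost surely, but then asserts strict inequality directly from the hypothesis $x_j \not\perp y \mid x^{(sub)}$. As you observe, conditional dependence alone does not force the conditional \emph{means} to differ on a positive-measure set --- dependence purely through higher moments (e.g., $y = x_j \varepsilon$ with symmetric mean-zero noise $\varepsilon$, where $x_j$ modulates only the conditional variance) gives equal Bayes risks under squared loss despite $x_j \not\perp y \mid x^{(sub)}$. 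So the paper's strictness claim for the MSE case implicitly requires the mean-relevance reading you make explicit, whereas for cross-entropy bare conditional dependence does suffice (dependence of $y$ on one coordinate given $x^{(sub)}$ implies dependence on the full extra block, so the conditional mutual information is strictly positive), exactly as you argue. In short: your proof matches the paper's in substance and is more careful than the paper's own on the one delicate point.
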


\subsection{The ALIGN framework}
\label{sec:ALIGN}
Motivated by our empirical observations and theoretical findings, we argue that learning a task-driven, high-quality masker during training is both necessary and beneficial for enhancing model interpretability and generalization. 
To this end, we propose \textbf{Attribution-Learning Iterative Guidance Network (ALIGN)}.
As illustrated in Fig.~\ref{fig:framework}, ALIGN jointly learns a masker $M$ and a classifier $f$ in an iterative manner.
The classifier $f$ is instantiated as a standard ResNet \cite{he2016resnet}, while the masker $M$ is a lightweight convolutional network that produces a soft mask $M(x) \in [0,1]^d$, highlighting task-relevant regions of the input. 

ALIGN consists of two interleaved optimization steps: one for refining the \textit{masker} to generate smooth and semantically meaningful regions that preserve prediction-relevant information, and the other for updating the \textit{classifier} based on both prediction and attribution alignment objectives.

\paragraph{Masker objective.}
Predefined segmentation models often produce imprecise or task-irrelevant masks, introducing noise that can hinder prediction performance. 
Thus, we propose learning a task-driven masker $M$ that automatically identifies input regions relevant to the model's prediction. 
The core goal of $M$ is to highlight informative features while suppressing misleading background regions.

Formally, we encourage the model to predict high confidence on the masked input $x \odot M(x)$ (i.e., retained foreground), and low confidence on the complementary region $x \odot (1 - M(x))$ (i.e., suppressed background). 
This contrast is captured by the probability distance:
\begin{equation}
    dist(x) = f_y(x \odot M(x)) - f_y(x \odot (1-M(x))) ~,
\end{equation}
where $f_y(\cdot)$ denotes the predicted probability for the ground-truth class $y$. 
The main objective for the masker is:
\begin{equation}
    \mathcal{L}_{dist} = MSE(dist(x),1) ~,
\end{equation}
which encourages $f$ to rely more on the foreground regions identified by $M$. Since $dist(x) \in (-1, 1)$, the mean squared error is used to enforce a high contrast.

To ensure that the learned masks are both interpretable and effective, we incorporate two regularization terms. 
The sparsity loss penalizes unnecessary activations, encouraging compact masks:
\begin{equation}
    \mathcal{L}_{sparsity} = \|M(x)\|_1.
\end{equation}

The smoothness loss enforces spatial continuity by penalizing abrupt changes between neighboring pixels:
\begin{equation}
    \begin{aligned}
    \mathcal{L}_{smooth} = \frac{1}{Z} \sum_{i,j} 
     ( & \left| M_{i,j}(x) - M_{i+1,j}(x) \right| \\
     & + \left| M_{i,j}(x) - M_{i,j+1}(x) \right| ).
    \end{aligned}
\end{equation}



In summary, the overall loss for the masker $M$ is:
\begin{equation}
\mathcal{L}_{mask} = \mathcal{L}_{dist} 
+ \lambda_{1} \mathcal{L}_{sparsity} 
+ \lambda_{2} \mathcal{L}_{smooth},
\end{equation}
where $\lambda_1>0$ and $\lambda_2>0$ are hyper-parameters.

\paragraph{Classifier objective.}
The classifier $f$ is trained to minimize a composite objective:
\begin{equation}
\label{eq:L_clf_total}
\mathcal{L}_{clf} = \mathcal{L}_{cls} + \lambda_{3} \mathcal{L}_{egl} + \lambda_{4} \mathcal{L}_{reg},
\end{equation}
comprising the following components:
(1) Classification loss $\mathcal{L}_{cls} = \text{CE}(f(x), y)$ is the standard cross-entropy loss used to ensure accurate predictions.
(2) Explanation guided loss $\mathcal{L}_{egl} = \text{BCE}(\Phi_y(x), M(x))$ aligns the classifier’s explanation $\Phi_y(x)$ 
with the mask $M(x)$ generated by the masker, using binary cross-entropy.
(3) Mixup-based regularization: Inspired by \cite{li2023dre}, we use a mixup strategy for both input and explanation.
Formally, 
for two inputs $(x_i, y)$ and $(x_j, y)$ with the same class, a synthetic sample is constructed as:
\begin{equation}
        \tilde{x} = \beta x_i + (1 - \beta) x_j, \quad \beta \sim \text{Beta}(\alpha, \alpha).
\end{equation}
The regularization loss is defined as:
\begin{equation}
\begin{aligned}
    \mathcal{L}_{reg} = & \| \beta \Phi(x_i) + (1 - \beta) \Phi(x_j) - \Phi(\tilde{x}) \|_1 \\
    & + \text{CE}(f(\tilde{x}), y) + \|\Phi(\tilde{x})\|_1,
\end{aligned}
\end{equation}
which encourages consistency in both prediction and attribution space, while promoting explanation sparsity.

\paragraph{Joint optimization.}
As illustrated in Fig.~\ref{fig:framework}, the ALIGN framework is trained in an alternating optimization scheme. 
Specifically, during each training iteration, the masker $M$ is updated while keeping the classifier $f$ fixed. 
Subsequently, the classifier $f$ is updated using the latest output from $M$, with $M$ parameters held constant.

To mitigate cold-start issues and enhance training stability in the early stages, the process begins with classifier-only training.
At this stage, only the standard classification loss $\mathcal{L}_{cls}$ and the mix-up based regularization $\mathcal{L}_{reg}$ are applied, and no explanation supervision is imposed (i.e., $\mathcal{L}_{egl} = 0$).
This allows the classifier to form a reliable initial decision without being influenced by potentially unstable masks.

The masker is introduced after a predefined warm-up phase (set to 200 in our paper). 
The iterative training then continues with both $\mathcal{L}_{mask}$ and $\mathcal{L}_{clf}$ active, gradually aligning the model’s reasoning with the learned masks.

\begin{table*}[h]
    \footnotesize
    \setlength{\tabcolsep}{1.5mm}
    \centering
    \begin{tabular}{c|cccc|cccc|cccc|cccc}
    \toprule
        \multirow{2}{*}{\textbf{Method}} & \multicolumn{4}{c|}{\textbf{VOC2007}} & \multicolumn{4}{c|}{\textbf{LabelMe}} & \multicolumn{4}{c|}{\textbf{Caltech101}} & \multicolumn{4}{c}{\textbf{SUN09}} \\ 
        \cmidrule{2-17}
        & Acc & AUC & Suff & Comp & Acc & AUC & Suff & Comp & Acc & AUC & Suff & Comp & Acc & AUC & Suff & Comp \\ \midrule 
        ERM & 85.35 & 76.95 & 17.18 & 15.64 & \textbf{80.80} & 71.08 & 1.22 & 15.21 & 99.73 & 96.86 & 33.71 & 2.22 & 80.87 & \textbf{77.06} & 29.42 & 11.20 \\ 
        IRM & 81.45 & 75.46 & 26.38 & 11.83 & 77.47 & 68.36 & 8.49 & 14.10 & 98.86 & 94.71 & 11.41 & \textbf{10.34} & 81.45 & 75.46 & 26.38 & 11.83 \\ 
        Mixup & 85.55 & 74.38 & \textbf{10.87} & \textbf{24.47} & 79.73 & 69.31 & 23.21 & \textbf{17.14} & 99.58 & 94.84  & 30.26  & 3.56 & 78.85 & 72.57 & \textbf{17.75} & 9.34 \\ 
        SGT & 86.64 & 72.91 & 17.72 & 18.55 & 79.54 & 60.80 & 26.43 & 12.85 & 99.52 & 95.43 & 22.39 & 2.28 & 79.23 & 63.20 & 32.87 & 12.36 \\ 
        SGDrop & 86.26 & 78.37 & 18.02 & 17.35 & 79.25 & 68.81 & 12.89 & 13.29 & 99.94 & 97.70 & 12.30 & 1.68 & 80.15 & 75.54 & 23.98 & 8.60 \\ 
        DRE & 85.61 & 77.41 & 16.54 & 17.02 & 80.31 & 73.77 & 10.48 & 15.67 & 99.95 & \textbf{99.34} & 8.61 & 1.50 & 81.76 & 70.96 & 34.62 & 8.88 \\ \midrule
        ALIGN & \textbf{86.91} & \textbf{82.18} & 15.08 & 16.66 & 80.23 & \textbf{74.29} & \textbf{1.00} & 13.32 & \textbf{99.98} & 99.05 & \textbf{4.71} & 2.81 & \textbf{82.54} & 71.16 & 20.75 & \textbf{12.54} \\ 
        
        \midrule
        \midrule
        
        \multirow{2}{*}{\textbf{Method}} & \multicolumn{4}{c|}{\textbf{Loc\_38}} & \multicolumn{4}{c|}{\textbf{Loc\_43}} & \multicolumn{4}{c|}{\textbf{Loc\_46}} & \multicolumn{4}{c}{\textbf{Loc\_100}} \\ 
        \cmidrule{2-17}
        & Acc & AUC & Suff & Comp & Acc & AUC & Suff & Comp & Acc & AUC & Suff & Comp & Acc & AUC & Suff & Comp \\ \midrule
        ERM & 77.89 & 61.03 & \textbf{13.78} & 35.59 & \textbf{76.35} & 56.25 & 13.93 & 43.62 & 72.69 & 58.95 & 14.03 & 35.74 & 88.47 & 77.88 & 19.12 & 36.92 \\ 
        IRM & 80.50 & 59.47 & 11.58 & 36.47 & 74.05 & 63.31 & 9.95 & 49.15 & 72.69 & 59.46 & 21.32 & 38.18 & 87.53 & 71.63 & 13.38 & 37.58 \\ 
        Mixup & 77.41 & 57.25 & 14.94 & 34.14 & 67.28 & 55.74 & 19.65 & 38.86 & 73.85 & 58.78  & 11.52  & 36.63 & 88.72 & 75.89 & 16.48 & 38.79 \\
        SGT & 77.87 & 51.27 & 13.97 & 29.51 & 74.73 & 60.75 & 12.07 & 39.43 & 71.59 & 58.67 & 17.37 & 35.36 & 86.41 & 74.87 & 16.38 & 29.64 \\ 
        SGDrop & 80.26 & 63.85 & 21.68 & 46.41 & 73.38 & 56.72 & 29.82 & 60.21 & 74.94 & 67.31 & 14.73 & 35.07 & 89.43 & 79.15 & \textbf{9.43} & 36.08 \\ 
        DRE & 77.37 & 65.02 & 17.58 & 31.21 & 74.89 & 62.64 & 17.65 & \textbf{60.60} & 73.95 & 65.73 & \textbf{10.87} & 35.30 & 88.39 & 80.15 & 14.63 & 31.56 \\ \midrule
        ALIGN & \textbf{83.62} & \textbf{66.83} & 19.75 & \textbf{51.66} & 72.47 & \textbf{65.05} & \textbf{8.20} & 44.33 & \textbf{77.27} & \textbf{69.83} & 13.94 & \textbf{40.54} & \textbf{90.54} & \textbf{84.13} & 16.28 & \textbf{42.88} \\ \bottomrule
    \end{tabular}
    \caption{Overvall performance comparison of ALIGN and baselines arcoss eight sub-datasets. 
    \textbf{Bold} values indicate the winner for each metric and dataset.
    ALIGN consistently achieves the best or competitive performance across all metrics.} 
    \label{tab:main}
\end{table*}


\section{Experiments}
\label{sec:experiments}
\subsection{Datasets and settings}
We evaluate the baselines on VLCS and Terra Incognita datasets. VLCS~\cite{VLCS} includes four domains (VOC2007, LabelMe, Caltech101, SUN09) with about 25K images in 5 classes, while Terra Incognita~\cite{xu2020adversarialmixup} comprises four locations (38, 43, 46, 100) totaling around 11K images in 10 categories.

    

All sub-datasets were split into training, validation, and test sets with a ratio of 6:2:2. 
The detailed setting of environment and hyper-parameters can be found in \ref{appendix:config}.

\subsection{Baselines \& Evaluation Metrics}
We compare ALIGN with several representative baselines, including ERM~\cite{he2016resnet}, IRM~\cite{arjovsky2019irm}, Mixup~\cite{xu2020adversarialmixup}, SGT~\cite{ismail2021sgt}, SGDrop~\cite{bertoin2024sgdrop}, and DRE~\cite{li2023dre}.


In addition to prediction metrics such as \textbf{AUC} and \textbf{Acc}, 
the explanations are evaluated by two metrics: \textbf{Sufficiency (Suff)} and \textbf{Comprehensiveness (Comp)} \cite{guidedlearning}.




\subsection{Main Results}

To quantitatively assess \textbf{ALIGN}, we conduct experiments on VLCS and Terra Incognita, benchmarking against both standard and explanation-aware baselines. Table~\ref{tab:main} summarizes the results for prediction and explanation metrics.


On VLCS, ALIGN achieves the best accuracy and AUC in most domains (VOC2007, CAltech101, SUN09), consistently outperforming all baselines.
In terms of explanation quality, ALIGN yields competitive or superior results across both Suff and Comp metrics, indicating that its predictions rely on concise and interpretable evidence.

Similarly, in Terra Incognita dataset, ALIGN again leads in Acc and AUC for most locations (38, 43, 100) and consistently excels in interpretability metrics, demonstrating strong generalization in complex, real-world settings.

These improvements can be attributed to guidances provided by the masker in ALIGN, which effectively identifies task-relevant regions, enabling the classifier to focus on the most relevant input areas. 
To further assess its effect, we compare alternative masking strategies, including pretrained segmentation and heuristic methods (Sec.~\ref{sec:ablation_study}).

In a few cases, ALIGN may not achieve the absolute best performance but remains highly competitive. 
As noted in Lemma~\ref{lemma:in_domain_error_feature_inclusion}, this can occur when the generated mask inadvertently omits a few relevant features, causing slight degradation in in-distribution accuracy.
Nonetheless, as shown in Sec.~\ref{sec:experiment_ood}, ALIGN shows clear advantages under domain shifts by reducing dependence on spurious background cues.


\begin{table*}[h]
    \footnotesize
    \setlength{\tabcolsep}{1.5mm}
    \centering
    \begin{tabular}{l|cccc|cccc|cccc|cccc}
    \toprule
        \multirow{2}{*}{\textbf{Method}} & \multicolumn{4}{c|}{\textbf{VOC2007}} & \multicolumn{4}{c|}{\textbf{LabelMe}} & \multicolumn{4}{c|}{\textbf{Caltech101}} & \multicolumn{4}{c}{\textbf{SUN09}} \\ 
        \cmidrule{2-17}
        & Acc & AUC & Suff & Comp & Acc & AUC & Suff & Comp & Acc & AUC & Suff & Comp & Acc & AUC & Suff & Comp \\ \midrule 
        w/o EG & 85.61 & 77.41 & 16.54 & 17.02 & \textbf{80.31} & 73.77 & 10.48 & \textbf{15.67} & 99.95 & \textbf{99.34} & 8.61 & 1.50 & 81.76 & 70.96 & 34.62 & 8.88 \\ 
        m-SAM & 85.51 & 80.32 & \textbf{13.11} & \textbf{18.43} & 77.66 & 70.64 & 10.25 & 7.14 & 99.80 & 98.43 & 5.63 & 2.12 & 79.70 & 68.51 & 26.42 & 8.74 \\ 
        m-Gray & 86.90 & 79.15 & 13.45 & 14.31 & 78.79 & 69.83 & 10.09 & 10.30 & 99.86 & 97.42 & 11.97 & 1.21 & 80.13 & 70.59 & 32.75 & 7.53 \\ \midrule
        ALIGN & \textbf{86.91} & \textbf{82.18} & 15.08 & 16.66 & 80.23 & \textbf{74.29} & \textbf{1.00} & 13.32 & \textbf{99.98} & 99.05 & \textbf{4.71} & \textbf{2.81} & \textbf{82.54} & \textbf{71.16} & \textbf{20.75} & \textbf{12.54} \\ 
        
        \midrule
        \midrule

        \multirow{2}{*}{\textbf{Method}} & \multicolumn{4}{c|}{\textbf{Loc\_38}} & \multicolumn{4}{c|}{\textbf{Loc\_43}} & \multicolumn{4}{c|}{\textbf{Loc\_46}} & \multicolumn{4}{c}{\textbf{Loc\_100}} \\ 
        \cmidrule{2-17}
        & Acc & AUC & Suff & Comp & Acc & AUC & Suff & Comp & Acc & AUC & Suff & Comp & Acc & AUC & Suff & Comp \\ \midrule
        
        w/o EG & 77.37  & 65.02  & 17.58 & 31.21  & \textbf{74.89}  & 62.64  & 17.65  & \textbf{60.60}  & 73.95  & 65.73  & \textbf{10.87}  & 35.30  & 88.39  & 80.15 & 14.63 & 31.56 \\
        m-SAM & 80.02  & \textbf{69.72} & \textbf{7.57} & 30.10  & 72.02 & 63.07 & 13.86 & 44.13 & 73.06 & 67.12 & 23.09 & 40.48 & 89.13 & 79.51 & \textbf{11.78} & 38.54 \\ 
        m-Gray & 78.04 & 58.34 & 11.03 & 30.20  & 70.09 & 55.72 & 11.05 & 39.75 & 75.92 & \textbf{70.16} & 17.20  & 35.42 & 89.45 & 82.19 & 21.09 & 38.39 \\ \midrule
        ALIGN & \textbf{83.62}  & 66.83 & 19.75 & \textbf{51.66}  & 72.47  & \textbf{65.05}  & \textbf{8.20}  & 44.33 & \textbf{77.27}  & 69.83 & 13.94 & \textbf{40.54}  & \textbf{90.54}  & \textbf{84.13}  & 16.28 & \textbf{42.88} \\ \bottomrule

    \end{tabular}
    \caption{Ablation study on the role of the masker in ALIGN.
    The winners are in \textbf{bold}.}
    \label{tab:abla}
\end{table*}

\subsection{Ablation Study: Effectiveness of the Masker}
\label{sec:ablation_study}
In Sec.~\ref{sec:preliminary}, we found that ``precise segmentation can better improve prediction performance'', showing that masks generated by our proposed masker better support model \textit{inference} than those produced by SAM. 
Here, we further investigate the impact of the masker during \textit{training} by comparing ALIGN against three alternative variants. Each variant modifies the mask generation strategy to assess the contribution of the explanation-guided loss and mask quality:
\begin{itemize}
    \item \textbf{w/o EG} disables the EGL component during training by setting $\lambda_3=0$ in Eq. (\ref{eq:L_clf_total}). 
    
    \item \textbf{m-SAM} replaces the learned mask $M(x)$ with segmentation maps generated by the pretrained SAM \cite{kirillov2023segany}, as fixed targets for saliency alignment.

    \item \textbf{m-Gray} uses grayscale intensity values as the mask, i.e., $M(x) = \text{Gray}(x)$, encouraging the classifier’s saliency map to align with low-level pixel brightness. 
\end{itemize}

From the results reported in Table~\ref{tab:abla}, we observe that:
(1) Incorporating external mask signals for EGL (m-SAM, m-Gray) improves performance over the variant without EGL (w/o EG), confirming the value of explanation-based supervision.
(2) ALIGN, which dynamically generates task-relevant masks, further outperforms all fixed alternatives, demonstrating the advantage of task-driven, end-to-end learning of explanation signals.

\subsection{Out-of-Distribution Generalization}
\label{sec:experiment_ood}
To complement the theoretical analysis presented in Sec. \ref{sec:theoretical} and further assess the generalizability of ALIGN, we conduct out-of-distribution (OOD) experiments. 
In this setting, the model is trained on a source domain and evaluated on remaining target domains \textit{without retraining}. 
This setup simulates real-world distribution shifts where models must generalize to unseen environments.

\begin{table}[h]
    \footnotesize
    \setlength{\tabcolsep}{0.8mm}
    \centering
    \begin{tabular}{c|c|ccccc}
    \toprule
        \textbf{Train on} & \textbf{Test on} & \textbf{ERM} & \textbf{SGT} & \textbf{SGDrop} & \textbf{DRE} & \textbf{ALIGN} \\
    \midrule
        
        \multirow{3}{*}{VOC2007} 
        & LabelMe & 59.43 & \textbf{62.12} & 59.81 & 51.71 & 56.89 \\ 
        & Caltech101 & 98.31 & 99.48 & 98.73 & 99.52 & \textbf{99.53} \\ 
        & SUN09 & 73.02 & 69.23 & 71.91 & 73.74 & \textbf{74.03} \\ 
    \midrule
        \multirow{3}{*}{LabelMe} 
        & VOC2007 & 67.27 & 70.12 & 61.39 & 58.53 & \textbf{73.63} \\ 
        & Caltech101 & 90.04  & 91.92 & 89.92 & 89.59 & \textbf{96.63} \\ 
        & SUN09 & 56.36 & 59.58 & 50.03 & 51.35 & \textbf{61.36} \\ 
    \midrule
        \multirow{3}{*}{Caltech101} 
        & VOC2007 & 51.68 & 41.53 & 42.10 & \textbf{53.11} & 49.60 \\ 
        & LabelMe & 38.89 & 33.68 & 33.32 & 41.64 & \textbf{42.24} \\ 
        & SUN09 & 42.03 & 34.21 & 37.61 & 43.87 & \textbf{48.04} \\ 
    \midrule
        \multirow{3}{*}{SUN09} 
        & VOC2007 & 64.65 & 66.56 & 63.51 & \textbf{66.76} & 61.01 \\ 
        & LabelMe & 59.66 & 57.86 & 52.82 & 57.78 & \textbf{62.07} \\ 
        & Caltech101 & 73.37 & 73.97 & 65.08 & \textbf{83.24} & 65.77 \\ 
    \bottomrule
    \end{tabular}
    \caption{Prediction accuracy on VLCS under OOD setting.}
    \label{tab:ood}
\end{table}


Table~\ref{tab:ood} reports accuracy across all source-target domain pairs in VLCS. \footnote{More comprehensive results can be found in Appendix~\ref{appendix:ood}.}
Overall, ALIGN consistently outperforms all baselines across most OOD settings, often by a substantial margin. 
For instance, when trained on the LabelMe and tested on Caltech101, ALIGN achieves an impressive 96.63\% accuracy, significantly surpassing the strongest baseline, SGT, which achieves 91.92\%. 

We attribute the improved generalization to the trainable masker, which aligns classifier saliency with learned task-relevant regions. This alignment helps identify domain-invariant features, reducing overfitting to spurious source correlations and enhancing performance on unseen domains.

\subsection{Case Study}

\paragraph{Post-hoc Explanation.}
To better understand how ALIGN improves both prediction accuracy and interpretability, 
we conduct a case study comparing the attribution patterns of models. 
Fig.~\ref{fig:case_study} visualizes the Grad-CAM saliency maps for two samples from the VLCS dataset.
The ground truth is shown in brackets, with the corresponding predictions from each model are reported below each heatmap.
%

\begin{figure}[h]
    \centering
    \includegraphics[width=0.95\linewidth]{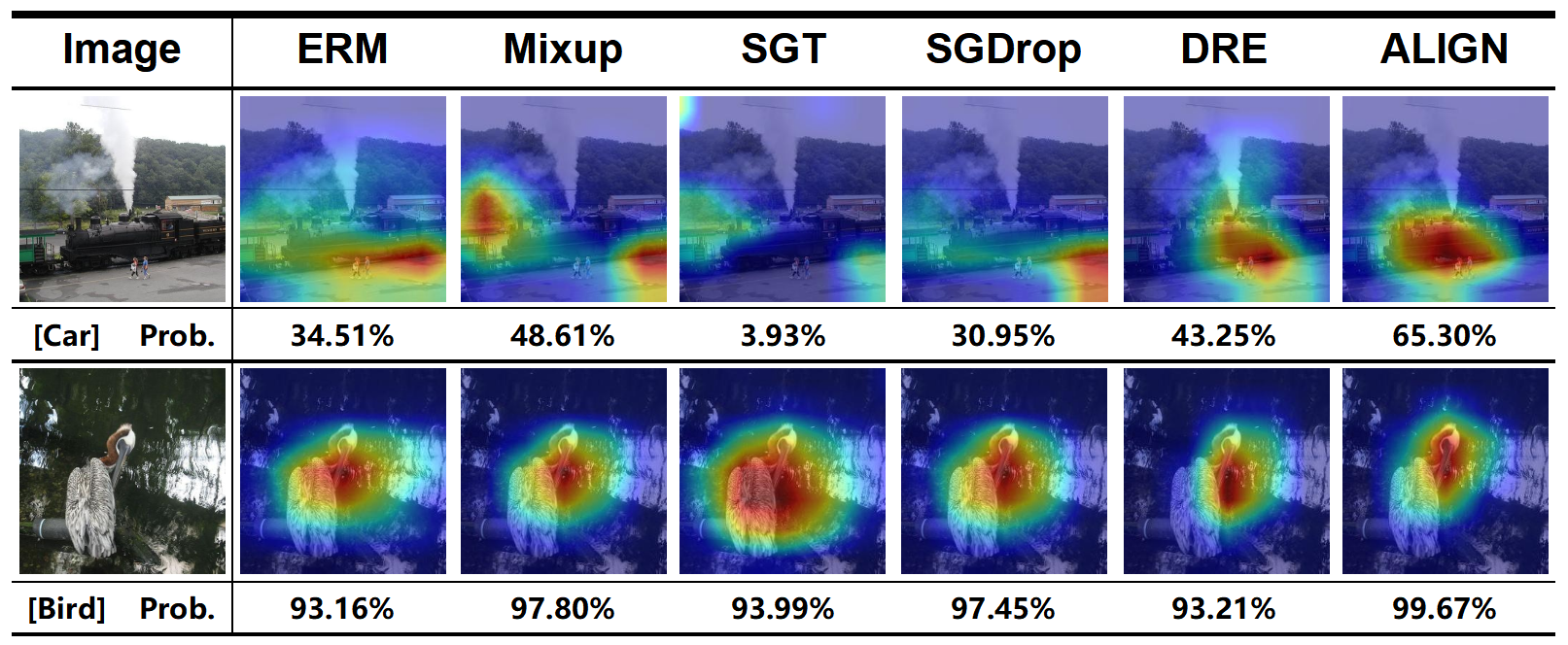}
    \caption{Two case studies from the VLCS dataset.}
    \label{fig:case_study}
\end{figure}

In the first example (ground-truth label: \texttt{car}), baselines exhibit scattered and misaligned attention, often focusing on irrelevant background regions (e.g., sky, smoke, or vegetation). This diluted focus results in substantially lower classification confidence. 
In contrast, ALIGN concentrates its attention around the vehicle, supporting a more accurate prediction (65.30\%, the highest among all methods).

In the second example (ground-truth label: \texttt{bird}), 
most baselines attend broadly to the surrounding area, incorporating unnecessary contextual features (e.g., background trees or shadows). 
ALIGN, however, focuses consistently on the bird’s highly discriminative regions (head and beak), leading to the highest prediction confidence (99.67\%).



Overall, two real examples highlight ALIGN’s ability to consistently ground its predictions in semantically meaningful regions, leading to superior performance in both accuracy and interpretability relative to baseline models.

\paragraph{Mask Evaluation.}
To assess mask quality, we conducted a qualitative study using 100 randomly selected cases from each VLCS sub-dataset, including outputs from SAM and our masker.
Four independent volunteers performed pairwise comparisons, labeling each case as Win, Tie, or Lose based on the masker’s performance against SAM. 
Additionally, representative heatmaps were visualized to compare the spatial attention and segmentation consistency of both methods.
Results are shown in Fig.~\ref{fig:case_mask}.

\begin{figure}[h]
    \centering
    \includegraphics[width=0.9\linewidth]{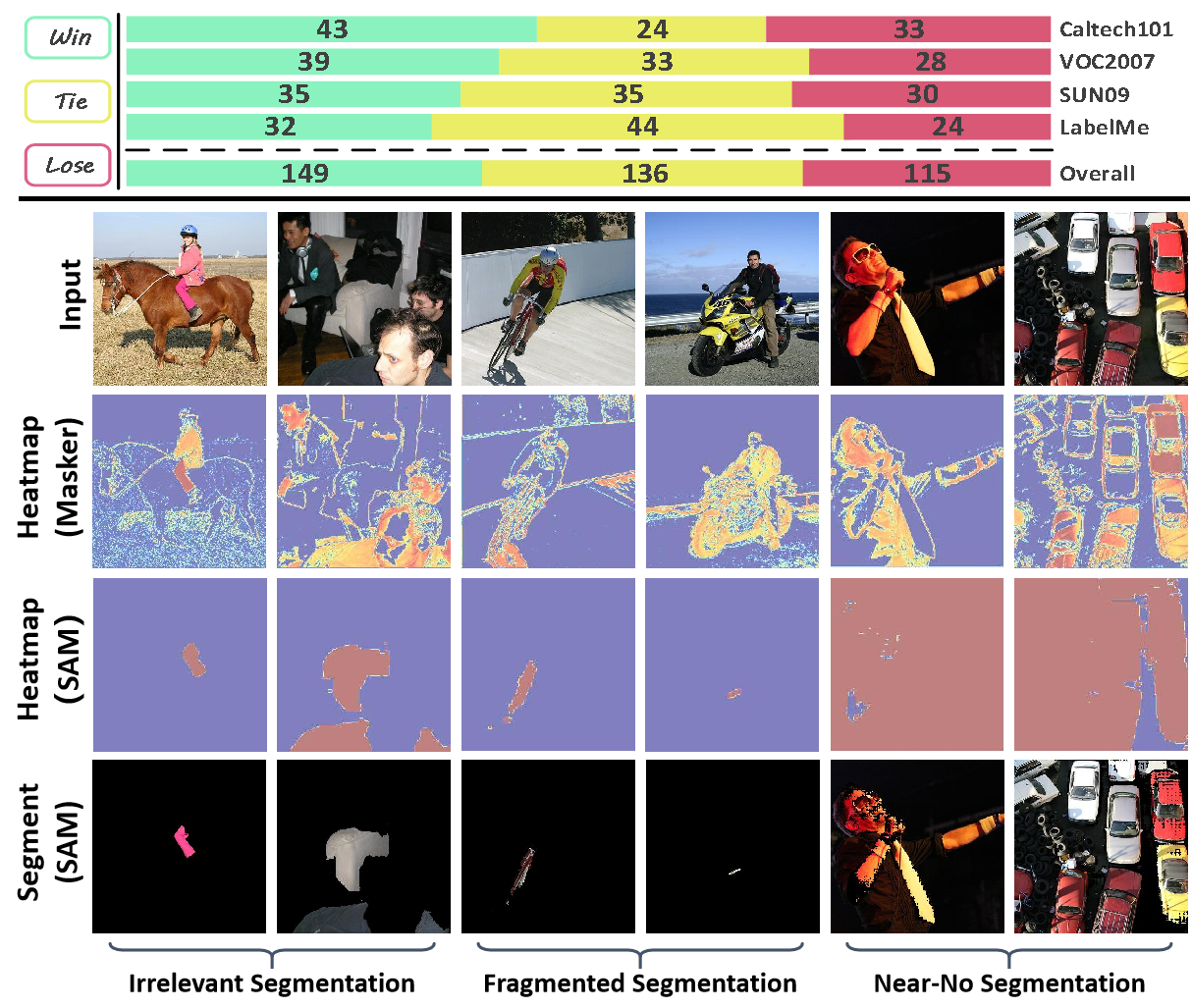}
    \caption{Mask quality evaluation results on VLCS dataset. 
    Upper: human assessment; Lower: visualization examples.}
    \label{fig:case_mask}
\end{figure}

From the human evaluation, our masker consistently outperformed SAM, especially on \textbf{Caltech101} and \textbf{VOC2007}, achieving 149 wins versus 115 losses (\textbf{Win/Lose ratio $\approx$ 1.30}), demonstrating clear superiority. 
Visualization analysis further revealed that SAM often produced (1) \textit{irrelevant}, (2) \textit{fragmented}, or (3) \textit{near-empty} segmentations, while our masker precisely highlighted task-relevant target regions. 
These results suggest that SAM, being \textbf{task-agnostic}, generates suboptimal masks when used directly as guidance, whereas our \textbf{task-specific} masker yields more reliable and semantically aligned masks for downstream applications.

\section{Conclusion}
We propose ALIGN, a novel EGL framework that iteratively trains a masker alongside the classifier, effectively constraining the model’s attention to object-relevant regions.
This end-to-end approach not only enhances performance under domain shifts, but also yields more faithful and interpretable explanations, as demonstrated by both quantitative and qualitative evaluations. 
In future work, we plan to extend ALIGN to multi-object scenarios and explore alternative explanation mechanisms to further enrich interpretability.

\section*{Acknowledgments}
This work was supported by the National Natural Science Foundation of China (No. 62476060). 
Chao Chen was also supported by the National Key Research and Development Program of China (No. 2023YFB3106504) and Pengcheng-China Mobile Jointly Funded Project (No.
2024ZY2B0050).
We appreciate all the co-workers' constructive comments, which significantly contributed to the development of this paper.

\bibliography{aaai2026}

\newpage

\newpage
\null
\newpage

\renewcommand{\thesubsection}{\thesection.\arabic{subsection}}

\appendix

\section{Technical Appendix}

\subsection{Proofs}
\label{appendix:proofs}

\begin{proof}[Proof of Lemma \ref{lemma:mse_discrepancy}]
We first expand the inner part and compute the difference between source and target domains:
\begin{equation}
\begin{aligned}
\Delta_{MSE} &= \left| \mathbb{E}_{\mathcal{D}_T}[(f(x) - y)^2] - \mathbb{E}_{\mathcal{D}_S}[(f(x) - y)^2] \right|\\
&= \Bigl| \left( \mathbb{E}_{\mathcal{D}_T}[f(x)^2] - \mathbb{E}_{\mathcal{D}_S}[f(x)^2] \right) \\
& \quad
- 2 \left( \mathbb{E}_{\mathcal{D}_T}[y f(x)] - \mathbb{E}_{\mathcal{D}_S}[y f(x)] \right) \\
& \quad 
+ \left( \mathbb{E}_{\mathcal{D}_T}[y^2] - \mathbb{E}_{\mathcal{D}_S}[y^2] \right) \Bigr| \\
& \leq \left| \mathbb{E}_{\mathcal{D}_T}[f(x)^2] - \mathbb{E}_{\mathcal{D}_S}[f(x)^2] \right| \\
& \quad
+ 2 \left| \mathbb{E}_{\mathcal{D}_T}[y f(x)] - \mathbb{E}_{\mathcal{D}_S}[y f(x)] \right| \\
& \quad
+ \left| \mathbb{E}_{\mathcal{D}_T}[y^2] - \mathbb{E}_{\mathcal{D}_S}[y^2] \right| 
\end{aligned}
\end{equation}

Since $|y|, |f(x)| \le 1$, then $|y f(x)| \le 1$, and based on Cauchy-Schwarz inequality:
\begin{equation}
    \left| \mathbb{E}_{\mathcal{D}_T}[f(x)^2] - \mathbb{E}_{\mathcal{D}_S}[f(x)^2] \right|
    \le 2 \left| \mathbb{E}_{\mathcal{D}_T}[f(x)] - \mathbb{E}_{\mathcal{D}_S}[f(x)] \right|, 
\end{equation}

Now bound the term involving $y f(x)$ using Cauchy-Schwarz and the assumption $|y| \le 1$:
\begin{equation}
\left| \mathbb{E}_{\mathcal{D}_T}[y f(x)] - \mathbb{E}_{\mathcal{D}_S}[y f(x)] \right|
\le \left| \mathbb{E}_{\mathcal{D}_T}[f(x)] - \mathbb{E}_{\mathcal{D}_S}[f(x)] \right|.
\end{equation}


Summing terms yields the following bound:
\begin{equation}
\begin{aligned}
\Delta_{MSE} & \leq \left| \mathbb{E}_{\mathcal{D}_T}[f(x)^2] - \mathbb{E}_{\mathcal{D}_S}[f(x)^2] \right| \\
& \quad
+ 2 \left| \mathbb{E}_{\mathcal{D}_T}[y f(x)] - \mathbb{E}_{\mathcal{D}_S}[y f(x)] \right| 
+ \left| \mathbb{E}_{\mathcal{D}_T}[y^2] - \mathbb{E}_{\mathcal{D}_S}[y^2] \right| \\
& \leq
4 \left| \mathbb{E}_{\mathcal{D}_T}[f(x)] - \mathbb{E}_{\mathcal{D}_S}[f(x)] \right|
+ \left| \mathbb{E}_{\mathcal{D}_T}[y^2] - \mathbb{E}_{\mathcal{D}_S}[y^2] \right|.
\end{aligned}
\end{equation}

If the label variance remains unchanged between domains, i.e., $\mathbb{E}_{\mathcal{D}_T}[y^2] = \mathbb{E}_{\mathcal{D}_S}[y^2]$, the bound simplifies to the final claim.
\end{proof}

\begin{proof}[Proof Sketch of Lemma \ref{lemma:ce_discrepancy}]

The cross-entropy loss for a single example can be written as $CE(f(x),y) = -\log f(x)_y$. 
The function $q \mapsto -\log q$ is continuously differentiable and its derivative is bounded on $(0,M)$ if $q$ is bounded away from 0 by some $1/M$ ($M\ll 1)$. 
In other words, $-\log f(x)_y$ is Lipschitz continuous on the range of $f(x)_y$, which implies that a change of $\epsilon$ in $f(x)_y$ causes at most $M\epsilon$ change in $-\log f(x)_y$. 
Taking expectation over $(x,y)$ yields the bound. 
\end{proof}

\begin{proof}[Proof of Lemma \ref{lemma:in_domain_error_feature_inclusion}]
We first give proofs concerning the MSE cases.
By the law of total variance:
\begin{align*}
\mathrm{Var}(y \mid x^{(sub)}) &= \mathbb{E}\left[ \mathrm{Var}(y \mid x^{(obj)}) \mid x^{(sub)} \right] \\
&+ \mathrm{Var}\left( \mathbb{E}[y \mid x^{(obj)}] \mid x^{(sub)} \right).
\end{align*}

Taking expectations with respect to $x^{(sub)}$:
\begin{align*}
\mathbb{E}_{x^{(sub)}}[\mathrm{Var}(y \mid x^{(sub)})] 
&= \mathbb{E}_{x^{(sub)}}\left[ \mathbb{E}\left[ \mathrm{Var}(y \mid x^{(obj)}) \mid x^{(sub)} \right] \right] \\
& + \mathbb{E}_{x^{(sub)}}\left[\mathrm{Var}\left( \mathbb{E}[y \mid x^{(obj)}] \mid x^{(sub)} \right) \right].
\end{align*}

By the tower property of expectation:
\[
\mathbb{E}_{x^{(sub)}}\left[ \mathbb{E}\left[ \mathrm{Var}(y \mid x^{(obj)}) \mid x^{(sub)} \right] \right] = \mathbb{E}_{x^{(obj)}}[\mathrm{Var}(y \mid x^{(obj)})].
\]

Thus:
\[
\mathbb{E}_{x^{(sub)}}[\mathrm{Var}(y \mid x^{(sub)})] \geq \mathbb{E}_{x^{(obj)}}[\mathrm{Var}(y \mid x^{(obj)})].
\]

Since for squared loss, the Bayes risk equals to the expected conditional variance:
\begin{align*}
    \mathbb{E}\left[(y - f_2^*(x^{(obj)}))^2\right] &= \mathbb{E}[\mathrm{Var}(y \mid x^{(obj)})], \\
    \mathbb{E}\left[(y - f_3^*(x^{(sub)}))^2\right] &= \mathbb{E}[\mathrm{Var}(y \mid x^{(sub)})],
\end{align*}
we conclude that:
\[
\mathbb{E}\left[(y - f_2^*(x^{(obj)}))^2\right]
\leq 
\mathbb{E}\left[(y - f_3^*(x^{(sub)}))^2\right].
\]

Equality holds if and only if:
\[
\mathrm{Var}\left( \mathbb{E}[y \mid x^{(obj)}] \mid x^{(sub)} \right) = 0,
\]
which implies that \( \mathbb{E}[y \mid x^{(obj)}] = \mathbb{E}[y \mid x^{(sub)}] \), i.e., the additional features in $x^{(obj)} \setminus x^{(sub)}$ do not provide extra information about $y$ given $x^{(sub)}$. Therefore, strict inequality holds if there exists $x_j \in x^{(obj)} \setminus x^{(sub)}$ such that $x_j \not\!\perp\!\!\!\perp y \mid x^{(sub)}$.

Now consider the cross entropy case.
The expected cross-entropy loss of a Bayes-optimal classifier equals to the conditional entropy:
\begin{align*}
\mathbb{E}_{\mathcal{D}_S}[-\log f_2(x)_y] 
& = \mathbb{E}_{x \sim \mathcal{D}_S} \left[ \mathbb{E}_{y \sim p(y \mid x^{obj})} [-\log p(y \mid x^{(obj)})] \right] 
\\
&= 
H(y \mid x^{(obj)}),
\end{align*}
and similar for $f_3$ relying on $x^{(sub)}$.

Since $x^{(sub)} \subsetneq x^{(obj)}$, 
the data-processing inequality for entropy \cite{cover1999elements} gives: 
\begin{equation*}
H(y \mid x^{(obj)}) \leq H(y \mid x^{(sub)}),
\end{equation*}
with equality if and only if:
\[
y \perp \{x^{(obj)} \setminus x^{(sub)}\} \mid x^{(sub)}.
\]

Therefore, the expected cross-entropy loss satisfies:
\[
\mathbb{E}_{\mathcal{D}_S}[-\log f_2(x)_y] \leq \mathbb{E}_{\mathcal{D}_S}[-\log f_3(x)_y],
\]
with equality only under the stated conditional independence.

\end{proof}

\begin{table*}[h]
    \footnotesize
    \setlength{\tabcolsep}{1mm}
    \centering
    \begin{tabular}{c|cc|cc|cc|cc||cc|cc|cc|cc}
    \toprule
        Metric & \multicolumn{8}{c||}{Acc} & \multicolumn{8}{c}{AUC} \\ \midrule
        \multirow{2}{*}{Method}
        & \multicolumn{2}{c|}{VOC2007} & \multicolumn{2}{c|}{LabelMe} & \multicolumn{2}{c|}{Caltech101} & \multicolumn{2}{c||}{SUN09} 
        & \multicolumn{2}{c|}{VOC2007} & \multicolumn{2}{c|}{LabelMe} & \multicolumn{2}{c|}{Caltech101} & \multicolumn{2}{c}{SUN09}
        
        \\ \cmidrule{2-17}
        & SAM & masker & SAM & masker & SAM & masker & SAM & masker & SAM & masker & SAM & masker & SAM & masker & SAM & masker \\ \midrule
        ERM & 80.00 & 85.92 & 75.09 & 81.13 & 99.64 & 99.65 & 81.40 & 81.86 & 74.19 & 75.54 & 70.40 & 71.31 & 97.19 & 97.27 & 75.11 & 76.37 \\ 
        SGT & 83.85 & 86.37 & 75.84 &  79.24 & 99.29 & 99.65 & 77.28 & 80.03 & 70.56 & 70.76 & 55.28 & 59.98 & 95.18 & 95.66 & 61.93 & 62.44 \\ 
        SGDrop & 78.66 & 84.44 & 71.88 &  79.05 & 100 &100 & 78.20 &  79.57 & 75.00 & 77.52 & 65.99 & 68.00 & 97.93 & 98.16 & 74.35 & 75.06 \\ 
        DRE & 84.00 & 86.81 & 73.58 & 80.00 & 100 & 100 & 80.94 & 83.53 & 75.28 & 77.21 & 71.87 & 73.51 & 98.87 & 99.38 & 69.19 & 71.28 \\ 
        ALIGN & 83.40 & 86.81 & 77.35 & 82.26 & 100 &  100 & 80.94 & 86.81 & 81.11 & 81.37 & 72.65 & 73.85 & 99.21 & 99.32 & 68.15 & 81.37 \\ \bottomrule
    \end{tabular}
    \caption{Numerical results on impct of mask quality during inference on VLCS dataset.}
    \label{tab:bgrm}
\end{table*}

\subsection{Related work}
\label{appendix:related_works}
\textbf{Human-Annotated Saliency Supervision.}
These methods use human annotations to guide models in focusing on semantically relevant regions.
Specifically, RES \cite{gao2022res} introduces a saliency-guided loss for explanation invariance. 
MAGI \cite{zhang2023magi} uses multi-annotator masks to align model saliency with diverse perspectives. 
CARE \cite{zhuang2019care} integrates attention masks for lesion classification in imbalanced medical imaging tasks.
GRADIA \cite{gao2022gradia} supports interactive human–model alignment for attention-based explanations.
VISFIS \cite{ying2022visfis} shows that incorporating visual feature-importance supervision and Right-for-the-Right-Reason objectives benefits in VQA tasks.
Although effective, these methods depend heavily on \textbf{manually annotated} masks or insertion/deletion metrics, which typically requires high cost and thus limits scalability.

\textbf{Human-Annotated-Free Approaches.}
Without relying on human annotations, these methods use saliency cues to guide model learning.
SGT \cite{ismail2021sgt} uses gradient-based input masking to enforce consistency between the masked and original predictions.
SMOOT \cite{karkehabadi2024smoot} further presents it as part of the online training, dynamically determining which pixels to occlude to promote more meaningful saliency learning. 
DRE \cite{li2023dre} stabilizes the explanations under distribution shifts. 
These approaches eliminate supervised masks, but often introduce additional hyper-parameters and \textit{lack theoretical analysis}.

\textbf{Beyond Image tasks.}
There are some works to explore EGL within \textit{graph neural networks}. GazeGNN \cite{wang2024gazegnn} improves interpretability and robustness in GNNs by aligning learned attention with human-understandable node importance. 
GNES \cite{gao2021gnes} and GG-NES \cite{etemadyrad2024ggnes} leverage explanation signals to guide GNN training, demonstrating explanation consistency improves performance under distribution shifts in graph-related tasks. 
EGL has also been applied to \textit{large language models} to improve reasoning capabilities while enhancing transparency. 
Works like \cite{li2023symbolic,li2022explanations} explore using explanations from LLMs as supervision signals to train smaller-scale models. 
These approaches leverage chain-of-thought explanations or attention-based rationales produced by LLMs to guide the training of compact models. 
Together, these studies suggest that EGL is emerging as a general paradigm across modalities, enabling the models to learn “for the right reasons” in a scalable and effective manner.

\begin{table*}[h]
    \small
    \centering
    \begin{tabular}{c|c|cc|cc|cc|cc|cc}
    \toprule
        \multirow{2}{*}{\textbf{Train on}} & \multirow{2}{*}{\textbf{Test on}} & \multicolumn{2}{c|}{\textbf{ERM}} & \multicolumn{2}{c|}{\textbf{SGT}} & \multicolumn{2}{c|}{\textbf{SGDrop}} & \multicolumn{2}{c|}{\textbf{DRE}} & \multicolumn{2}{c}{\textbf{ALIGN}} \\ 
        \cmidrule{3-12}
        & & Acc & AUC & Acc & AUC & Acc & AUC & Acc & AUC & Acc & AUC \\ \midrule
        \multirow{3}{*}{VOC2007} 
        & LabelMe & 59.43 & 80.55 & \textbf{62.12} & 77.96 & 59.81 & 77.72 & 51.71 & \textbf{83.68} & 56.89 & 82.27 \\ 
        & Caltech101 & 98.31 & 90.03 & 99.48 & 89.72 & 98.73 & 93.88 & 99.52 & 94.90 & \textbf{99.53} & \textbf{96.34} \\ 
        & SUN09 & 73.02 & 65.12 & 69.23 & 62.84 & 71.91 & 65.47 & 73.74 & 66.61 & \textbf{74.03} & \textbf{67.26} \\ \midrule
        \multirow{3}{*}{LabelMe} 
        & VOC2007 & 67.27 & \textbf{71.65} & 70.12 & 60.36 & 61.39 & 58.23 & 58.53 & 67.80 & \textbf{73.63} & 69.09 \\ 
        & Caltech101 & 90.04 & 70.60 & 91.92 & 63.69 & 89.92 & 57.66 & 89.59 & 71.79 & \textbf{96.63} & \textbf{76.11} \\ 
        & SUN09 & 56.36 & \textbf{62.63} & 59.58 & 54.58 & 50.03 & 52.88 & 51.35 & 60.37 & \textbf{61.36} & 57.85 \\ \midrule
        \multirow{3}{*}{Caltech101} 
        & VOC2007 & 51.68 & 72.99 & 41.53 & 72.73 & 42.10 & 72.15 & \textbf{53.11} & 73.81 & 49.60 & \textbf{74.05} \\ 
        & LabelMe & 38.89 & 61.93 & 33.68 & 66.11 & 33.32 & 56.99 & 41.64 & 64.61 & \textbf{42.24} & \textbf{66.41} \\ 
        & SUN09 & 42.03 & 70.43 & 34.21 & 69.21 & 37.61 & \textbf{75.76} & 43.87 & 72.43 & \textbf{48.04} & 69.85 \\ \midrule
        \multirow{3}{*}{SUN09} 
        & VOC2007 & 64.65 & \textbf{75.54} & 66.56 & 69.11 & 63.51 & 71.05 & \textbf{66.76} & 67.29 & 61.01 & 64.97 \\ 
        & LabelMe & 59.66 & 79.89 & 57.86 & 67.08 & 52.82 & 78.76 & 57.78 & 77.28 & \textbf{62.07} & \textbf{82.04} \\ 
        & Caltech101 & 73.37 & \textbf{79.89} & 73.97 & 72.14 & 65.08 & 64.37 & \textbf{83.24} & 68.74 & 65.77 & 57.43 \\ \bottomrule
    \end{tabular}
    \caption{Experimental results on VLCS with the OOD setting.}
    \label{tab:ood-vlcs}
\end{table*}

\begin{table*}[h]
    \small
    \centering
    \begin{tabular}{c|c|cc|cc|cc|cc|cc}
    \toprule
        \multirow{2}{*}{\textbf{Train on}} & \multirow{2}{*}{\textbf{Test on}} & \multicolumn{2}{c|}{\textbf{ERM}} & \multicolumn{2}{c|}{\textbf{SGT}} & \multicolumn{2}{c|}{\textbf{SGDrop}} & \multicolumn{2}{c|}{\textbf{DRE}} & \multicolumn{2}{c}{\textbf{ALIGN}} \\ 
        \cmidrule{3-12}
        & & Acc & AUC & Acc & AUC & Acc & AUC & Acc & AUC & Acc & AUC \\ \midrule
        \multirow{3}{*}{Loc\_38} 
        & Loc\_43 & 9.46 & 39.58 & \textbf{18.35} & 37.61 & 12.85 & 38.06 & 17.76 & \textbf{42.75} & 13.21 & 41.13 \\ 
        & Loc\_46 & 11.62 & 32.99 & 18.68 & 34.61 & \textbf{19.17} & \textbf{42.80} & 16.75 & 35.69 & 14.57 & 40.86 \\ 
        & Loc\_100 & 18.75 & 38.18 & 25.02 & 29.74 & 39.34 & 42.48 & 29.10 & 41.88 & \textbf{43.01} & \textbf{49.04} \\ \midrule
        \multirow{3}{*}{Loc\_43} 
        & Loc\_38 & 40.75 & 54.55 & \textbf{42.29} & \textbf{60.42} & 40.62 & 52.87 & 35.08 & 49.84 & 39.24 & 48.82 \\ 
        & Loc\_46 & 40.70 & 46.82 & 40.68 & 56.03 & 39.00 & 47.03 & \textbf{45.11} & 60.04 & 37.32 & \textbf{60.07} \\ 
        & Loc\_100 & 38.51 & 46.29 & \textbf{51.33} & 62.35 & 46.19 & 47.23 & 50.91 & 53.50 & 29.46 & \textbf{62.96} \\ \midrule
        \multirow{3}{*}{Loc\_46} 
        & Loc\_38 & 22.00 & \textbf{49.30} & 28.45 & 47.46 & \textbf{32.15} & 40.15 & 23.34 & 38.64 & 27.28 & 46.79 \\ 
        & Loc\_43 & 46.53 & 45.85 & 45.85 & 43.92 & 46.42 & 52.37 & \textbf{49.05} & 46.53 & 48.83 & \textbf{57.40} \\ 
        & Loc\_100 & \textbf{54.70} & \textbf{54.88} & 54.21 & 51.83 & 31.53 & 41.46 & 31.48 & 47.62 & 42.48 & 53.98 \\ \midrule
        \multirow{3}{*}{Loc\_100} 
        & Loc\_38 & 28.03 & 57.66 & 16.06 & 47.11 & \textbf{44.23} & 62.14 & 32.83 & 59.96 & 25.59 & \textbf{66.66} \\ 
        & Loc\_43 & 22.88 & \textbf{51.20} & 25.01 & 41.91 & 25.39 & 42.76 & \textbf{26.46} & 45.91 & 22.95 & 50.80 \\ 
        & Loc\_46 & \textbf{39.82} & 54.36 & 38.72 & 49.40 & 36.56 & 48.85 & 34.43 & 53.21 & 37.80 & \textbf{60.97} \\ \bottomrule
    \end{tabular}
    \caption{Experimental results on Terra Incognita with the OOD setting.}
    \label{tab:ood-loc}
\end{table*}

\subsection{Implementation detail}
\label{appendix:config}
We conduct all the experiments using PyTorch on a single NVIDIA's T4 GPU.
The classification models are built upon ResNet-18 backbones. 
The masker is a CNN-based model that consists of three convolutional blocks: the first two use 3x3 kernels, with ReLU and Batch Normalization applied after each convolution, 
while the final block applies a 1x1 convolution followed by a Sigmoid activation to produce a mask. 
All images are resized to $224\times224$.

Both the classifier and the masker are optimized using the Adam optimizer.
The hyperparameters $\lambda_1$, $\lambda_2$, $\lambda_3$, and $\lambda_4$ are tuned via cross-validation.
Finally, ALIGN adopts $\lambda_1 = 10$ and $\lambda_2 = 1$ for \textit{masker} across all the experiments.
As for \textit{classifier}, $\lambda_3 = \lambda_4 = 0.1$ for VLCS dataset, and $\lambda_3 = \lambda_4 = 0.2$ for Terra Incognita dataset.

The training of ALIGN is performed in two stages: we initially train the classifier for 200 iterations, followed by iterative training of both the classifier and the masker for an additional 100 iterations. 
Early stopping is also employed to prevent overfitting.


The explanations are evaluated by the following two metrics \cite{guidedlearning}: 

\begin{itemize}

    \item \textbf{Sufficiency (Suff)} quantifies the degradation in model performance when only the salient regions of the input are retained. It is formally defined as:
    \begin{equation}
        Suff = f_y(x) - f_y(g(x, \Phi_y(x))),
    \end{equation}
    where $g(x, \Phi_y(x))$ represents the subset of input pixels deemed highly important by the explanation. 
    A \textit{lower} Suff value indicates that the explanation captures more essential information needed for accurate prediction.

    \item \textbf{Comprehensiveness (Comp)} 
    measures the drop in prediction confidence when the highlighted regions are removed. Formally, 
    \begin{equation}
        Comp = f_y(x) - f_y(x \setminus g(x, \Phi_y(x))),
    \end{equation}
    where a \textit{high} value indicates that the model strongly relies on the identified regions to make its prediction.
\end{itemize}

\subsection{Impact of mask quality during Inference}
\label{appendix:mask}
To the complement perliminary experiment in Sec.~\ref{sec:preliminary},
Table~\ref{tab:bgrm} presents the complete numerical results to evaluate impact of mask quality during inference on the VLCS dataset.

\subsection{Complete Experimental Results under OOD Setting}
\label{appendix:ood}
To evaluate model predicvtive performance (evaluated by accuracy and AUC) under OOD settings, 
Tables~\ref{tab:ood-vlcs} and~\ref{tab:ood-loc} report the results on the VLCS and Terra Incognita datasets, respectively.

\end{document}